%%%%%%%% ICML 2025 EXAMPLE LATEX SUBMISSION FILE %%%%%%%%%%%%%%%%%

\documentclass{article}
\usepackage{multirow}

% Recommended, but optional, packages for figures and better typesetting:
\usepackage{microtype}
\usepackage{graphicx}
\usepackage{wrapfig}
\usepackage{booktabs} % for professional tables

% hyperref makes hyperlinks in the resulting PDF.
% If your build breaks (sometimes temporarily if a hyperlink spans a page)
% please comment out the following usepackage line and replace
% \usepackage{icml2025} with \usepackage[nohyperref]{icml2025} above.
\usepackage{hyperref}

% Attempt to make hyperref and algorithmic work together better:

% Use the following line for the initial blind version submitted for review:
%\usepackage{icml2025}

% If accepted, instead use the following line for the camera-ready submission:
\usepackage[accepted]{icml2025}

% For theorems and such
\usepackage{amsmath}
\usepackage{amssymb}
\usepackage{mathtools}

\usepackage{amsthm}
\usepackage{thm-restate}
% if you use cleveref..
\usepackage[capitalize,noabbrev]{cleveref}

%%%%%%%%%%%%%%%%%%%%%%%%%%%%%%%%
% THEOREMS
%%%%%%%%%%%%%%%%%%%%%%%%%%%%%%%%
\theoremstyle{plain}
\newtheorem{theorem}{Theorem}[section]
\newtheorem{proposition}[theorem]{Proposition}

\theoremstyle{definition}
\newtheorem{definition}[theorem]{Definition}

\theoremstyle{remark}

% Todonotes is useful during development; simply uncomment the next line
%    and comment out the line below the next line to turn off comments
%\usepackage[disable,textsize=tiny]{todonotes}
\usepackage[textsize=tiny]{todonotes}

% The \icmltitle you define below is probably too long as a header.
% Therefore, a short form for the running title is supplied here:
\icmltitlerunning{Scaling Probabilistic Circuits via Monarch Matrices}

% %%%%%%%%%%%%%%%%%%%% packages %%%%%%%%%%%%%%%%%%%%
\usepackage{bm}
\usepackage{subcaption}
\usepackage{color,booktabs}

\usepackage{enumitem}

\newcommand{\pc}{\mathcal{C}}
\newcommand{\pcfunc}{p}

\newcommand{\pcleaffn}{f}
\newcommand{\pcnode}{n}
\newcommand{\scope}[1]{\bm{X}_{#1}}

\newcommand{\childnode}{c}
\newcommand{\childnodeblock}{\bm{c}}

\newcommand{\ch}{\text{ch}}

\newcommand{\nodeblock}{\bm{n}}
\newcommand{\weightmatrix}{W}

\newcommand{\mbf}{\bm}

\newcommand{\RN}[1]{%
  \textup{\uppercase\expandafter{\romannumeral#1}}%
}

\begin{document}

\twocolumn[
\icmltitle{Scaling Probabilistic Circuits via Monarch Matrices}

% It is OKAY to include author information, even for blind
% submissions: the style file will automatically remove it for you
% unless you've provided the [accepted] option to the icml2025
% package.

% List of affiliations: The first argument should be a (short)
% identifier you will use later to specify author affiliations
% Academic affiliations should list Department, University, City, Region, Country
% Industry affiliations should list Company, City, Region, Country

% You can specify symbols, otherwise they are numbered in order.
% Ideally, you should not use this facility. Affiliations will be numbered
% in order of appearance and this is the preferred way.
\icmlsetsymbol{equal}{*}

\begin{icmlauthorlist}
\icmlauthor{Honghua Zhang}{equal,yyy}
\icmlauthor{Meihua Dang}{equal,comp}
\icmlauthor{Benjie Wang}{equal,yyy}
\icmlauthor{Stefano Ermon}{comp}
\icmlauthor{Nanyun Peng}{yyy}
\icmlauthor{Guy Van den Broeck}{yyy}
\end{icmlauthorlist}

\icmlaffiliation{yyy}{Department of Computer Science, University of California, Los Angeles}
\icmlaffiliation{comp}{Department of Computer Science, Stanford University}

\icmlcorrespondingauthor{Honghua Zhang}{hzhang19cs@gmail.com}

% You may provide any keywords that you
% find helpful for describing your paper; these are used to populate
% the "keywords" metadata in the PDF but will not be shown in the document
\icmlkeywords{Machine Learning, ICML}

\vskip 0.3in
]

% this must go after the closing bracket ] following \twocolumn[ ...

% This command actually creates the footnote in the first column
% listing the affiliations and the copyright notice.
% The command takes one argument, which is text to display at the start of the footnote.
% The \icmlEqualContribution command is standard text for equal contribution.
% Remove it (just {}) if you do not need this facility.

%\printAffiliationsAndNotice{}  % leave blank if no need to mention equal contribution
\printAffiliationsAndNotice{\icmlEqualContribution} % otherwise use the standard text.

\begin{abstract}
Probabilistic Circuits (PCs) are tractable representations of probability distributions allowing for exact and efficient computation of likelihoods and marginals. Recent advancements have improved the scalability of PCs either by leveraging their sparse properties or through the use of tensorized operations for better hardware utilization. However, no existing method fully exploits both aspects simultaneously. In this paper, we propose a novel sparse and structured parameterization for the sum blocks in PCs. By replacing dense matrices with sparse Monarch matrices, we significantly reduce the memory and computation costs, enabling unprecedented scaling of PCs. From a theory perspective, our construction arises naturally from circuit multiplication; from a practical perspective, compared to previous efforts on scaling up tractable probabilistic models, our approach not only achieves state-of-the-art generative modeling performance on challenging benchmarks like Text8, LM1B and ImageNet, but also demonstrates superior scaling behavior, achieving the same performance with substantially less compute as measured by the number of floating-point operations (FLOPs) during training.
% efficiency, achieving the same performance with less computation as measured by the number of floating-point operations (FLOPs) during training.
\end{abstract}

\section{Introduction}

Probabilistic circuits (PCs) are a unifying representation of tractable probability distributions through computation graphs~\cite{ProbCirc20,darwiche2003differential}. The key property that separates PCs from other deep generative models such as flow-based models~\citep{papamakarios2021normalizing} and VAEs~\citep{kingma2013auto} is their \emph{tractability}. This property enables PCs to compute various queries, including marginal probabilities, exactly and efficiently~\citep{VergariNeurIPS21}. The tractability of PCs have been exploited in a number of domains, including fair and explainable machine learning \citep{choi2021group,WangIJCAI21}, causal inference \citep{wang2021provable,zevcevic2021interventional,wang2022tractable,wang2023compositional}, controllable generation \citep{LiuICLR24,zhang2024adaptable}, and neuro-symbolic AI \citep{ahmed2022semantic,maene2024klay}.

Recent advancements in PC learning techniques~\cite{LiuICML23,gala2024scaling} and efficient tensorized implementations~\cite{PeharzICML20,DangAAAI21,LiuICML24} have significantly enhanced the expressiveness and scalability of PCs. However, to further boost the performance of PCs, simply scaling up the model size is insufficient; we need to better utilize the available capacity. To this end, \citet{DangNeurIPS22} found that learned PCs empirically exhibit significant sparsity, and leveraged this observation to iteratively learn PC structures through pruning and growing. However, the sparse connections learned are arbitrary, making them difficult to tensorize and parallelize effectively.

In this paper, we focus on leveraging \emph{structured} sparse parameterizations for the sum blocks in PCs, which represent linear maps.
To the best of our knowledge, all previous circuit architectures utilizing tensorized operations have relied on dense matrices to parameterize these linear maps~\citep{PeharzICML20}, which incur a quadratic cost in the number of nodes. Inspired by recent advances in low-rank approximations for transformers~\citep{hu2022lora}, we propose a novel, more efficient parameterization for the sum blocks.

We begin by illustrating how our parameterization naturally arises from \emph{circuit multiplication}. Previous analysis~\citep{shen2016tractable,VergariNeurIPS21} showed that multiplying two (compatible) circuits could result in a quadratic increase in size in the worst case. However, we observe that the linear maps in the sum blocks generated by multiplication are not dense but rather the \emph{Kronecker product} of the linear maps from the original blocks, which can be implemented more efficiently. Further, by explicitly materializing this map as interleaving sums and permutations, we identify an interesting connection between these product circuits and some recently introduced class of structured matrices including \emph{Butterfly} matrices~\citep{dao2019learning} and \emph{Monarch} matrices~\citep{dao2022monarch}. Building on this insight, we propose replacing the dense linear maps in PCs with Monarch layers.

In our empirical evaluation\footnote{Code available at \url{https://github.com/wangben88/MonarchCircuits}}
, we demonstrate that by replacing dense matrices in PCs with structured Monarch matrices, we are able to scale PCs to orders-of-magnitude larger hidden sizes and, among a variety of tractable generative models, we are able to achieve state-of-the-art density estimation performance on various benchmarks, including ImageNet32/64~\cite{deng2009imagenet} for image modeling and Text8~\cite{mahoney2011text8} and LM1B~\cite{chelba2013one} for language modeling. Furthermore, we show that, compared to circuits with dense layers, the ones with Monarch layers yield significantly better scaling curves: they can achieve the same performance with substantially fewer floating-point operations (FLOPs) in training and inference.

\begin{figure*}[t]
    \centering
    \begin{minipage}[b]{0.38\textwidth}
    \centering
    \begin{subfigure}[b]{0.95\linewidth}
        \includegraphics[page=1,width=\linewidth]{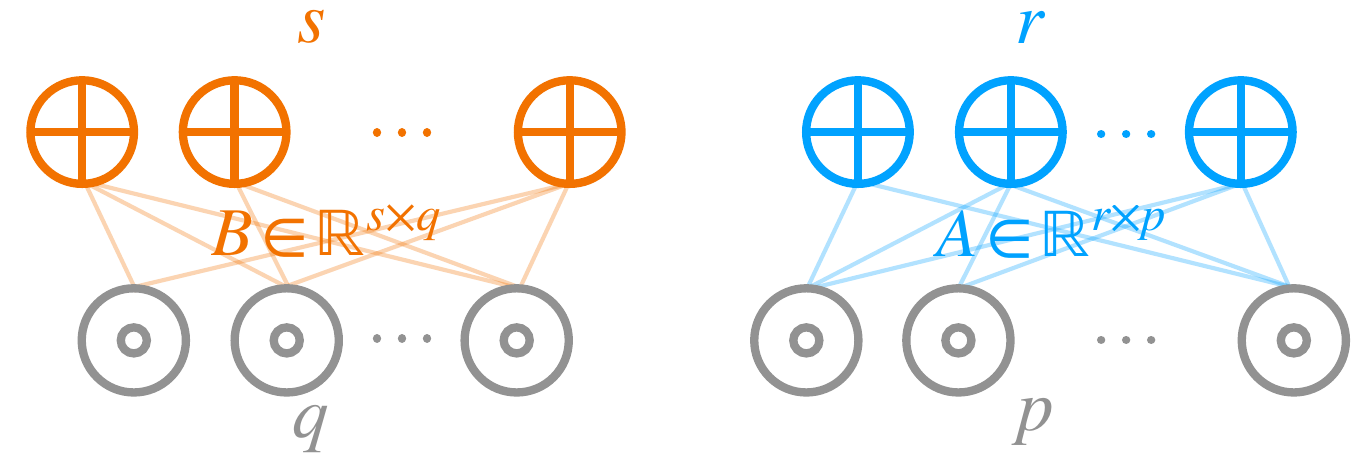}
        \caption{}
        \label{fig:sum_blocks}
    \end{subfigure}
    \begin{subfigure}[b]{0.7\linewidth}
        \includegraphics[page=2,width=\linewidth]{figs/pipe.pdf}
        \caption{}
        \label{fig:kronecker_product}
    \end{subfigure}
    \end{minipage}\hfill
    \begin{minipage}[b]{0.62\textwidth}
    \centering
    \begin{subfigure}[b]{0.93\linewidth}
        \includegraphics[page=3,width=\linewidth]{figs/pipe.pdf}
        \caption{}
         \label{fig:monarch_product}
    \end{subfigure}
    \end{minipage}\hfill
\caption{\label{fig:pipe}\textbf{Probabilistic circuits architecture illustration with Monarch matrices.} (a) Two sum blocks in PCs with weight matrices $A, B$ of arbitrary dimensions. (b) A constructed sum block with weight matrix as the Kronecker product $A \otimes B$, representing the circuit product of two sum blocks. (c) Efficient circuit representation for the linear transformation $A \otimes B$.}
\end{figure*}

\section{Tensorized Probabilistic Circuits}
\paragraph{Notation} We use uppercase to denote variables (e.g. $X$) and lowercase to denote values of variables (e.g. $x$). We use boldface to denote sets of variables/values (e.g. $\mbf{X}, \mbf{x}$).

\begin{definition}[Probabilistic Circuit]
A PC $\pc \!=\! (\mathcal{G}, \mbf{\theta})$ represents a joint probability distribution over random variables $\bm{X}$ through a directed acyclic (computation) graph (DAG) $\mathcal{G}$ parameterized by $\mbf{\theta}$. Specifically, the DAG $\mathcal{G}$ consists of three types of nodes -- \emph{sum}, \emph{product}, and \emph{leaf} nodes. Each leaf node $\pcnode$ is associated with a non-negative function $\pcleaffn_\pcnode(X_{\pcnode})$ over some variable $X_{\pcnode}$, called its \emph{scope} $\scope{\pcnode} := \{X_{\pcnode}\}$. The scope of any sum or product node $\pcnode$ is defined to be $\bm{X}_\pcnode := \bigcup_{c \in \ch(\pcnode)} \bm{X}_{c}$, where $\ch(\pcnode)$ denotes the children of $\pcnode$ in $\mathcal{G}$. Each node $\pcnode$ represents a probability distribution $\pcfunc_{\pcnode}$ over its scope $\scope{\pcnode}$, defined recursively by:
    %$\pcfunc_\pcnode: \assign(\bm{W}) \to \mathbb{R}^{\geq 0}$ of a set of variables $\bm{W} \subseteq \bm{X}$,  that is defined recursively as follows:
    \begin{equation*}
        \pcfunc_{\pcnode}(\scope{\pcnode}) \!=\!
        \begin{cases}
            \pcleaffn_{\pcnode}(X_{\pcnode}) & \hspace{-2.8pt} \text{if $\pcnode$ is a leaf node} \\
            \prod_{\childnode \in \ch(\pcnode)} \pcfunc_{\childnode}(\scope{\childnode}) & \hspace{-2.8pt}\text{if $\pcnode$ is a product node} \\
             \sum_{c \in \ch(\pcnode)} \theta_{c|n} \cdot \pcfunc_{c}(\scope{c}) & \hspace{-2.8pt} \text{if $\pcnode$ is a sum node}
        \end{cases}
    \end{equation*}
where for each leaf node, function $\pcleaffn_{\pcnode}(X_{\pcnode})$ represents a normalized univariate probability mass/density function (e.g. Categorical, Gaussian); and for every sum node $n$,  $\theta_{c|n}$ is a non-negative weight associated with the edge $(n, c)$ in the DAG. If $\sum_{c \in \ch(n)} \theta_{c|n} = 1$, then the PC computes a normalized joint probability mass/density function. The function represented by a PC, denoted $p_{\pc}(\bm{X})$, is the function represented by its root node; and the size of a PC, denoted $|\pc|$, is the number of edges in its graph. 
\end{definition}

It is immediate from the definition that one can evaluate a PC's function with a single traversal through its computation graph. The distinguishing feature of PCs compared to other computation graphs such as neural networks is that one can also efficiently compute \emph{marginals} under the following restrictions on the node scopes:

\begin{definition}[Smoothness and Decomposability]
    A sum node is \emph{smooth} if all of its children have the same scope. A product node is \emph{decomposable} if its children have disjoint scope. A PC is smooth (resp.\ decomposable) if all of its sum (resp.\ product) nodes are smooth (resp.\ decomposable).
\end{definition}

In practice, probabilistic circuit graphs are typically designed in a tensorized manner, in which sets of nodes of the same type (sum, product, leaf) and with the same scope are grouped together as a \emph{block}; the computation graph is then specified through  connections between the blocks \citep{PeharzICML20,LiuICML24,loconte2024relationship}. We write $\nodeblock$ to denote a node block and $|\nodeblock|$ for the number of nodes in the block. 

\begin{definition}[Sum Block]
    A sum block $\nodeblock$ has a set of child blocks $\{\childnodeblock^{(i)}\}_{i=1}^{m}$, such that each sum node in the block is connected to every node in each of the child blocks. We can write $\weightmatrix \in \mathbb{R}^{|\nodeblock| \times (\sum_{i=1}^{m} |\childnodeblock^{(i)}|)}$ for the weight matrix. 
\end{definition}

\begin{definition}[Product Block]
    A product block $\nodeblock$ has a set of child blocks $\{\childnodeblock^{(i)}\}_{i=1}^{m}$. We define two types of product node block with different connectivity:
    \begin{itemize}
        \item Hadamard $\bigodot$: If $|\childnodeblock^{(i)}|\!=\!|\nodeblock|$ for all $i\!=\!1, \dots, n$, then we define a Hadamard product block where $\nodeblock\!=\! \bigodot_{i=1}^{m} \childnodeblock^{(i)}$.
        \item Kronecker $\bigotimes$: If $|\nodeblock| = \prod_{i=1}^{m} |\childnodeblock^{(i)}|$, then we can define a Kronecker product node block where $\nodeblock = \bigotimes_{i=1}^{m} \childnodeblock^{(i)}$.
    \end{itemize}
\end{definition}

Sum blocks represent parameterized linear maps, while product blocks represent fixed \emph{multi}linear maps. Typically, for smooth and decomposable PCs, one builds the circuit by alternating between sum and product blocks (i.e., children of sum blocks are product blocks, children of product blocks are sum/leaf blocks). In this paper, we focus on the parameterization of the sum blocks, which is independent from the choice of Hadamard or Kronecker product blocks (we use Hadamard product blocks for our experiments).

\noindent\textbf{Measuring the sizes of PCs~~~}  The size of a probabilistic circuit (number of edges) determines the number of FLOPS needed for a forward pass. This can be roughly characterized as a function of two parameters: the number of sum blocks $n$, and the input/output dimension of the largest sum block, which we call the \emph{hidden size} $h$. The size of the circuit is then bounded by $O(nh^2)$.

\section{From Circuit Multiplication to Generalized Monarch Matrices}
\label{sec:method}
\begin{figure}[t]
    \centering
    \includegraphics[page=4,width=0.8\linewidth]{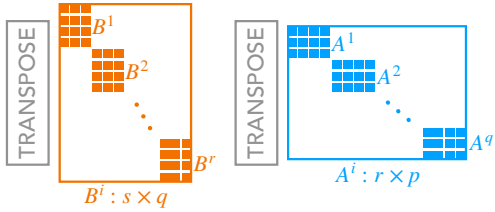}
    \caption{Generalized two-layer Monarch matrix. }
    \label{fig:monarch_matrix}
\end{figure}

In this section, we describe from first principles our construction of PCs parameterized by (generalized) Monarch matrices. Firstly, in Section \ref{sec:mult_as_monarch}, we focus on the fundamental operation of \emph{circuit multiplication}, and show how the resulting circuit exhibits structured sparsity, leading to PCs parameterized by Monarch matrices. In Section \ref{sec:mult_multiple}, we extend this analysis to the multiplication of more than two PCs, and derive a new \emph{multi-layer} Monarch matrix parameterization. Finally, in Section \ref{sec:butterfly}, for a PC with hidden size $h$, we show how this can be used to control the compute and memory used between $O(h\log(h))$ and $O(h^2)$, which effectively interpolates between the Monarch and the sparser \emph{Butterfly matrices}.

\subsection{Materializing Circuit Multiplication} \label{sec:mult_as_monarch}

%a. 

Given two circuits $\mathcal{A}$ and $\mathcal{B}$, the goal of circuit multiplication is to construct a tractable (i.e. smooth and decomposable) circuit $\pc$ such that $p_{\pc}(\mbf{x}) \propto p_{\mathcal{A}}(\mbf{x})\cdot p_{\mathcal{B}}(\mbf{x})$. 
If $\mathcal{A}$ and $\mathcal{B}$ are structured-decomposable with respect to the same vtree, i.e., the product nodes in $\mathcal{A}$ and $\mathcal{B}$ always factor the same way, then $\mathcal{C}$ can be constructed with at most a quadratic increment in size~\citep{shen2016tractable, VergariNeurIPS21}. The corresponding algorithm essentially constructs a new node for each pair of nodes with the same scope.

Here we focus on the local operation of multiplying two sum blocks. Given two sum blocks with weight matrices $A\!\in\!\mathbb{R}^{r\times p}$ and $B\!\in\!\mathbb{R}^{s\times q}$~(Figure~\ref{fig:sum_blocks}), their product can be represented as a sum block with input dimension $pq$ and output dimension $rs$, and weight matrix given by the Kronecker product $A\otimes B$~\citep{VergariNeurIPS21}. Figure~\ref{fig:kronecker_product} shows a circuit materialization of $A\otimes B$, consisting of $O(rspq)$ edges. 

However, the linear transformation given by $A \otimes B$ can actually be executed in a significantly more efficient way: letting $\mbf{x}\!\in\!\mathbb{R}^{p \times q}$ be an input tensor, we compute the linear transformation $(A \otimes B) \mbf{x}$ as
\begin{align*}
&((A \otimes B) \mbf{x})_{kl} \\
&= {\sum}_{ij} (A \otimes B)_{kl,ij} \mbf{x}_{ij} = {\sum}_{i, j} A_{ki} B_{lj} \mbf{x}_{ij} \\
&= {\sum}_{j} B_{lj} {\sum}_{i} A_{ki} \mbf{x}_{ij} = {\sum}_{j} B_{lj} (A\mbf{x})_{kj} \\
&= {\sum}_{j} B_{lj} (A\mbf{x})^{T}_{jk} = (B(A\mbf{x})^{T})_{lk} = (B(A\mbf{x})^{T})^{T}_{kl};
\end{align*}
hence, we have 
\begin{align}
(A \otimes B)\mbf{x} = (B(A\mbf{x})^{T})^T.
\label{eq:monarch_product}
\end{align}
We attempt to materialize Equation~\ref{eq:monarch_product} as a circuit, as shown in Figure~\ref{fig:monarch_product}, with $\mbf{x}\!\in\!\mathbb{R}^{p \times q}$ viewed as a flattened 1-d tensor of dimension $pq$. The total number of edges is bounded by $O(rpq\!+\!srq)$. For a rough comparison against the naive construction of $A\otimes B$, if $A$ and $B$ are both of dimension $m\times m$, then the naive circuit construction contains $O(m^4)$ edges while the construction based on Equation~\ref{eq:monarch_product} contains only $O(m^3)$ edges.\footnote{The code for circuit multiplication in the official implementations of \citet{wang2023compositional} and \citet{loconte2024subtractive} also achieve this complexity through einsum operations; though this is not explicitly described in either of the papers, which report the looser bound of $O(m^4)$, i.e., square of the circuit size. Crucially, we show that the product can be explicitly materialized as a circuit, which was missed by these prior works. This allows us to directly  apply standard PC inference and learning algorithms to the product circuit, such as parameter learning with expectation-maximization.}

Although the circuit shown in Figure~\ref{fig:monarch_product} is constructed by multiplying two sum blocks, it can also be interpreted as a \emph{sparse} representation for some linear transformation $\mcal{M}$ from $\mathbb{R}^{pq}$ to $\mathbb{R}^{rs}$. Furthermore, in this interpretation, each of the $A$ blocks (and each of the $B$ blocks) \emph{do not need to have the same parameters} for $\mcal{M}$ to be valid. Hence, we ``untie'' the parameters of $A$ and $B$ to obtain the (generalized) Monarch transformation $\mcal{M}$. 

\begin{definition}[Generalized Monarch Matrices \RN{1}]
Given $A \in \mathbb{R}^{r\times p\times q}$ and $B \in \mathbb{R}^{s\times q\times r}$, we define $\mcal{M}$ to be the linear transformation from $\mathbb{R}^{pq}$ to $\mathbb{R}^{rs}$ such that
$$
(\mcal{M}x)_{kl} = \sum_{i,j} A_{kij} B_{ljk} \mbf{x}_{ij} = (B\ast(A\ast \mbf{x})^{T})^T;$$
where $A\ast \mbf{x}$ denotes a ``batched'' linear transformation
$$(A\ast \mbf{x})_{kj} = \sum_{i} A_{kij} \mbf{x}_{ij}$$ with $j$ enumerating through the ``batch'' dimension.
\label{def:monarch_1}
\end{definition}
We visualize the generalized Monarch matrix in Figure~\ref{fig:monarch_matrix}. Here, the 2D matrices $A^j$ and $B^i$ are actually slices of the 3D tensors $A$, $B$ in Definition~\ref{def:monarch_1}; specifically $(A^j)_{ki} = A_{kij}$ and $(B^{k})_{lj} = B_{ljk}$.

Thus, to enable structured sparsity in PCs, we propose to replace the dense matrices in sum blocks with generalized Monarch matrices, parameterized by the tensors $A, B$. In the case where we have $h$ input and output nodes (with $p = q = r = s = \sqrt{h}$), then the compute and memory requirements are $O(h^{3/2})$ as compared to the $O(h^2)$ cost for a dense PC. The interpretation as circuit multiplication gives us a principled means to initialize the parameters of such a circuit: namely, train two smaller PCs $\mathcal{A}, \mathcal{B}$ with \emph{dense} layers and hidden sizes $\sqrt{h}$ and then multiply them to obtain a circuit representing the distribution $p_{\mathcal{C}}(\mbf{x}) \propto p_{\mathcal{A}}(\mbf{x}) p_{\mathcal{B}}(\mbf{x})$. We can then untie the parameters of $\mathcal{C}$ during training to leverage the fully general Monarch paramterization.

\subsection{Multiplying Multiple PCs}  \label{sec:mult_multiple}

A natural way to further generalize the construction above would be to consider products of \emph{multiple} PCs, or more specifically, Kronecker products of \emph{multiple} matrices. We generalize Equation~\ref{eq:monarch_product} as:
\begin{align}
\begin{split}
&(A^1 \otimes A^2 \dots \otimes A^d)\mbf{x} \\
&\quad= (A^d\dots (A^{2}(A^{1}\mbf{x})^S)^S \dots)^S;
\label{eq:generalized_monarch_product}
\end{split}
\end{align}
with $A^t \in \mathbb{R}^{m_t \times n_t}$ and $x \in \mathbb{R}^{n_1 \times \cdots \times n_d}$; the superscript $S$ denotes the \emph{left shifting operation} where, e.g., ${x^S}_{jki} = x_{ijk}$. 
Similarly, we can materialize Equation~\ref{eq:generalized_monarch_product} as a circuit, untie the shared parameters among the blocks $A^1, \dots, A^d$, and then obtain the (further) generalized construction of Monarch matrices.
\begin{definition}[Generalized Monarch Matrices \RN{2}] Let $\{A^t\}_{1 \leq t \leq d}$ be $d$ tensors, where $A^t$ has dimensions $m_t \times n_t \times n_{t+1} \times \cdots \times n_{d} \times m_{1} \times \cdots \times m_{t-1}$. We define the generalized Monarch matrix $\mcal{M} \in \mathbb{R}^{(m_1 \times \cdots m_d) \times (n_1\times \cdots n_d)}$
\begin{align*}
&(\mcal{M}x)_{j_1,j_2,\dots,j_d} \\
&= \sum_{i_1,i_2,\dots i_d} \left(\prod_{1\leq t \leq d} A^{t}_{j_t i_t i_{t+1} \dots i_{d} j_{1}\dots j_{t-1}}\right) x_{i_1 \dots i_d} \\
&= (A^d\ast(A^{d-1}\ast (\dots \ast(A^{2}\ast(A^{1}\ast x)^S)^S \dots)^S.
\end{align*}
Here $x^{t} := (A^t \ast \cdots \ast(A^1 \ast x)^S \cdots )^S$ is of dimension $n_{t+1}\times \cdots n_{d} \times m_{1} \times \cdots m_{t}$ and $A^{t+1} \ast x^{t}$ denotes a batched linear transformation such that
$$(A^{t} \ast x^{t-1})_{\cdots} = {\sum}_{i_t} {A^t}_{j_t i_t \cdots i_d j_1 \cdots j_{t-1}} x_{i_t \cdots i_d j_1 \cdots j_{t-1}},$$
with $i_{t+1} \cdots i_d j_1 \cdots j_{t-1}$ enumerating over the batch dimension. Note that the circuit materialization of this construction of Monarch matrix consists of $d$ consecutive sum blocks, so we call it a \emph{$d$-layer Monarch matrix}.
\label{def:monarch2}
\end{definition}

Despite the complexity of this definition, to construct a PC $\mcal{A}$ with Monarch layers of hidden size $h = \prod_{1 \leq t \leq d} h_t$, we really just need to multiply $d$ PCs $\mcal{A}_t$, each with dense layers of hidden size $h_t$. 
As in the case of two circuits, we can first train the smaller PCs $\mcal{A}_t$ with dense layers and use their parameters as an initialization point for the training of $\mcal{A}$ where $p_{\mcal{A}}(\mbf{x}) \propto \prod_{t} p_{\mcal{A}_t}(\mbf{x})$. Such parameter initialization significantly improves the result of training PCs with Monarch layers of large hidden sizes, and we refer readers to Section~\ref{sec:ablation} for details.

\subsection{Interpolating Butterfly and Monarch Matrices} \label{sec:butterfly}
To conclude this section, we draw an interesting connection between the generalized $d$-layer Monarch matrices and the \emph{butterfly matrices}~\citep{parker1995random, dao2019learning,meng2022butterflyflow}.
Butterfly matrices are a class of expressive structured matrices, constructed as the product of sparse matrices known as butterfly factor matrices. For ease of exposition, we will describe here butterfly factor matrices of size $D \times D$, where $D$ is a power of 2.
\begin{definition}[Butterfly Factor]
    Given any $D=2^d$ and $1 \leq i \leq d$, a butterfly factor matrix $B(i, D)$ is a sparse matrix with the following sparsity pattern:
    \begin{itemize}
        \item $B(1, D)$ has non-zero elements only along the diagonals of the four $\frac{D}{2} \times \frac{D}{2}$ submatrices.
        \item $B(i, D)$ is a block-diagonal matrix with block size $\frac{D}{2^{i-1}} \times \frac{D}{2^{i-1}}$, where each block is a $B\left(1, \frac{D}{2^{i-1}}\right)$ butterfly factor.
    \end{itemize}
\end{definition}

\begin{figure}
    \centering
    \includegraphics[width=\linewidth]{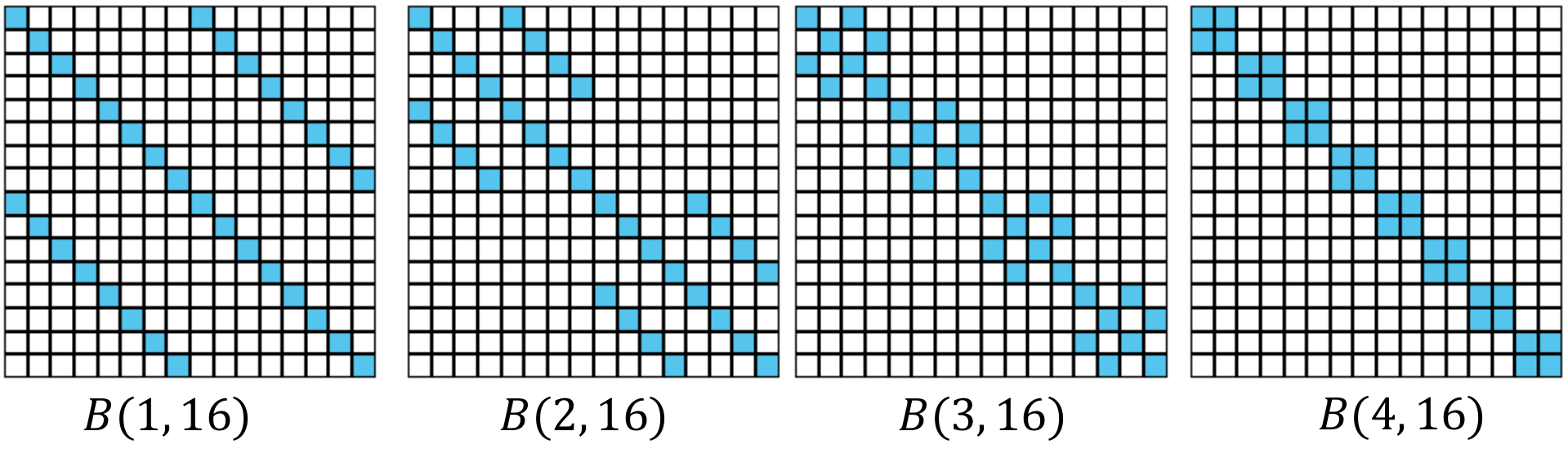}
    \caption{Illustration of Butterfly factors \citep{meng2022butterflyflow}.}
    \label{fig:butter_block}
\end{figure}

Examples of butterfly factors for $D=16$ are shown in Figure \ref{fig:butter_block}. A $D\times D$ \emph{butterfly matrix} is given by the matrix multiplication $B(D) := B(1, D) B(2, D) \ldots B(d,D)$. Even though this is a product of \emph{sparse matrices}, we show that in fact it can be interpreted as a product of \emph{dense tensors}, specifically, generalized Monarch matrices:

\begin{theorem} \label{thm:butterflymonarch}
Suppose that $m_t = n_t = 2$ for all $1 \leq t \leq d$. Then the generalized $d$-layer Monarch matrix $\mcal{M}$ is a butterfly matrix of dimension $2^d \times 2^d$.
\end{theorem}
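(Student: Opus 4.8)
The plan is to show that the $d$-layer generalized Monarch matrix with all $m_t=n_t=2$, when written out as a product of its $d$ constituent sum-block linear maps, coincides factor-by-factor with the butterfly decomposition $B(D)=B(1,D)B(2,D)\cdots B(d,D)$ after accounting for the left-shift permutations $S$. Recall from Definition~\ref{def:monarch2} that, before untying, $(\mcal{M}x)$ is computed by the interleaved scheme $(A^d\ast(\cdots\ast(A^1\ast x)^S\cdots)^S)$; when we untie, each $A^t$ is an arbitrary tensor of the stated shape, so the $t$-th stage applies an arbitrary $2\times 2$ linear map to coordinate $i_t$ while treating all remaining indices as a batch, and then cyclically shifts the index order. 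Concretely I would first unfold each such ``batched $2\times 2$ map composed with a shift'' into an explicit $2^d\times 2^d$ sparse matrix $M_t$ acting on the flattened vector, and compute its sparsity pattern.

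The key computational step is identifying this sparsity pattern. Flattening $x_{i_1\cdots i_d}$ in row-major order, acting on index $i_t$ with all others fixed touches exactly the pairs of coordinates differing in the $t$-th bit; this is precisely the sparsity of a block-diagonal matrix with $2^{t-1}$ blocks, each block being a $B(1,2^{d-t+1})$ factor — i.e.\ $M_t$ (without the shift) has the pattern of $B(d-t+1,D)$ up to the ordering convention. The left-shift $S$ is a fixed permutation of the index tuple, hence a fixed permutation matrix $P$; the identity $PB(i,D)P^{-1}$ maps one butterfly factor to another (this is just the observation that conjugating by a bit-rotation permutes which bit each factor mixes). So I would track, stage by stage, the cumulative permutation $P^{t}$ and show that $P^{t}M_t (P^{t})^{-1}$ equals $B(\sigma(t),D)$ for the appropriate index $\sigma(t)$, with the final cumulative permutation being the identity (since $d$ applications of a cyclic shift on $d$ indices return to the start). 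Collecting terms then gives $\mcal{M}=B(1,D)\cdots B(d,D)$ exactly, and conversely every butterfly matrix is realized because the $2\times 2$ blocks are unconstrained.

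The main obstacle I expect is bookkeeping: getting the correspondence between (i) the index-tuple convention and the left-shift $S$ in Definition~\ref{def:monarch2}, (ii) the row-major flattening used to view tensors as vectors, and (iii) the specific block structure in the definition of $B(i,D)$ — in particular confirming that the shifts accumulate to the identity over $d$ layers and that the factor indices come out in the order $B(1,D),B(2,D),\dots,B(d,D)$ rather than reversed or permuted. A clean way to sidestep ambiguity is to prove the statement by induction on $d$: for $d=1$ the generalized Monarch matrix is literally an arbitrary $2\times 2$ matrix, which is $B(1,2)$; for the inductive step, peel off the first layer $A^1$ (an arbitrary block-diagonal-with-$2^{d-1}$-blocks-of-$2\times2$ acting on the leading index) followed by $S$, recognize the remaining $d-1$ stages as a $(d-1)$-layer Monarch matrix on the shifted tensor by the inductive hypothesis, and then verify that reintroducing the first factor plus the accumulated shift reproduces exactly $B(1,D)$ times a butterfly matrix on the lower-order bits embedded block-diagonally — which is the recursive definition of $B(i,D)$. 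I would also remark that the $m_t=n_t=2$ hypothesis is only used to make each stage a $2\times 2$ mixing; for general $m_t=n_t=b$ the same argument yields a ``base-$b$ butterfly,'' but that is outside the scope of the stated theorem.
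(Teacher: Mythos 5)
You reach the right conclusion, and your argument rests on the same core correspondence as the paper's proof: a dense $(d+1)$-dimensional $2\times 2\times\cdots\times 2$ tensor acting on a single bit, with the $2\times 2$ map allowed to depend arbitrarily on all remaining (batched) bits, is exactly a reshaping of a butterfly factor. The difference is direction and bookkeeping. The paper defines a ``butterfly unfurling'' that reshapes each dense tensor into a sparse matrix, proves this unfurling is precisely $B(i,2^d)$, and then multiplies $B(1,2^d)\cdots B(d,2^d)$, using the Kronecker deltas to tie indices across factors and collapse the product into the multilinear contraction of Definition~\ref{def:monarch2}. You instead start from the Monarch layers, unfold each batched $2\times 2$ stage into a sparse $2^d\times 2^d$ matrix, and absorb the interleaved left-shifts by conjugating with bit-rotation permutations; your induction on $d$ is a reasonable alternative the paper does not pursue, and your remark that every butterfly matrix is realized because the per-batch $2\times 2$ blocks are unconstrained supplies the inclusion that the paper's direction gives directly.

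Two bookkeeping points need repair when you write this out. First, with row-major flattening ($i_1$ most significant), the stage-$t$ matrix is block-diagonal with $2^{t-1}$ blocks, each of pattern $B\left(1,2^{d-t+1}\right)$; by the paper's definition this is the pattern of $B(t,D)$, not $B(d-t+1,D)$. Second, since $A^1$ is applied first it is the rightmost factor of the matrix product, so after the conjugations the factors emerge from ``mixes bit $d$'' on the left down to ``mixes bit $1$'' on the right; that is, you obtain the reversed-order product, equivalently the transpose of $B(1,D)B(2,D)\cdots B(d,D)$, not that ordered product ``exactly'' as claimed. For $d=2$ one can check directly that $\mathcal{M}_{(k,l),(i,j)}=A_{kij}B_{ljk}$ matches the transpose of the paper's expansion of $B(1,4)B(2,4)$ up to relabeling of tensor axes. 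This is a convention issue rather than a substantive gap---each factor's sparsity pattern is symmetric under transposition, and the paper's own appendix is equally loose about the ordering---but your final claim should be stated up to this reversal/transposition (or the butterfly product should be defined with the opposite factor order), and this is exactly the kind of index-convention hazard you yourself flagged.
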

More generally, suppose that we want to construct a generalized Monarch matrix of dimension $h \times h$. We can interpolate between butterfly (consisting of $\log_2(h)$ different $(\log_2(h)+1)$-dimensional tensors of shape $2 \times 2 \times ... \times2$) and Monarch matrices (consisting of $2$ different $3$-dimensional tensors of shape $h^{1/2} \times h^{1/2} \times h^{1/2}$) as follows. 
We can first pick a number $c$ such that $c^d = h$ for some $d$, and then the construction in Defintion~\ref{def:monarch2} would give us a $\log_c(h)$-layer Monarch matrix with each layer (in terms of the circuit materialization) consisting of $hc$ edges; i.e., the \# of FLOPs needed for the linear transformation is given by $hc\log_c(h)$. We have the butterfly matrix if $c = 2$ and the commonly known Monarch matrix~\citep{dao2022monarch} if $c = \sqrt{h}$. If $h$ is nice enough, e.g., $2^{20}$, we can freely choose $c$ to interpolate between $2$ and $\sqrt{h}$, which corresponds to $d$-layer Monarch matrices with $d$ ranging from $\log_2(h)$ to $2$, each having different degrees of sparsity.

\section{Scaling PCs via Monarch Matrices}
Employing structured matrices improves two aspects of probabilistic circuits. First, fixing the hidden size, the PCs require fewer FLOPs so training will be more efficient. Second, it reduces the memory consumption, so models with larger hidden sizes are able to fit in the GPU memory. These two benefits make it possible to scale up probabilistic circuits %with dense transition matrices 
to much larger hidden sizes. In this section, we discuss some of the practical choices to be made for learning PCs with (generalized) Monarch parameterizations, the effect of which we investigate empirically through ablation studies.

\noindent\textbf{Choices of Monarch Structures~~~}
As described at the end of Section~\ref{sec:method}, given a desired linear transformation from $\mathbb{R}^{h}$ to $\mathbb{R}^h$, there are different choices of Monarch structures. For example, $h = 2^{18}$ can be factorized as $2^9 \times 2^9$, $2^6 \times 2^6 \times 2^6$, $(2^3)^6$ etc., where $2^9 \times 2^9$ gives the standard two-layer Monarch matrix and $2^6 \times 2^6 \times 2^6$ gives a three-layer Monarch matrix. For hidden sizes like $2^{19}$ that cannot be nicely ``factored'' into squared matrices, we resort to the almost-squared factorizations like $2^{19} = 2^9 \times 2^{10}$. In Section~\ref{sec:exp-text8} and \ref{sec:ablation}, we conduct an empirical study on the scaling behavior of Monarch matrices with different number of layers and show that even though Monarch matrices with more layers exhibit slightly better scaling behavior compared to two-layer Monarch matrices, they induce higher memory consumption and thus are less desirable in practice.

\noindent\textbf{Initialization~~~}
The probabilistic interpretation of Monarch layers as circuit multiplication further provides an intuitive way for initializing large circuits. For instance, to initialize a 2-layer Monarch with hidden size of $h=\!2^{20}\!$,  we can multiply two smaller models with hidden size of $h=\!2^{10}\!$ (or any possible factorizations). The smaller models are trained separately to fit the same data distribution and then combined to initialize the larger model’s parameters.

\noindent\textbf{Training~~~}
We use a stochastic mini-batch version of Expectation-Maximization optimization~\citep{peharz2016latent}. 
Empirically, this approach converges faster and is better regularized compared to EM on the whole dataset. 

\noindent\textbf{Research Questions~~~} In the following experiments, we seek to verify the following hypotheses:
\begin{itemize}
    \item \emph{Scaling law for Monarch PCs.} Sparse structures achieve better scaling behavior compared to dense ones in terms of bits-per-dimension (BPD) versus floating point operations (FLOPs).
    \item \emph{Effect of Circuit Multiplication.} Initializations using circuit multiplication leads to better performance.
    \item \emph{Increasing model size induces sparsity.} For a fixed hidden size, denser matrices perform better, but the performance gap between sparse and dense matrices diminishes quickly as the hidden size increases.
\end{itemize}

\section{Experiments}
We evaluate our method using generative modeling benchmarks for both text and image data. 
We use log-likelihoods as a measurement of a model's performance and the number of floating point operations (FLOPs) per dimension as a measurement of a model's efficiency.   Given hidden size of $h$, the FLOPs per token is $h^2$ for an HMM and  $2h^{3/2}$ for a two layer Monarch-HMM. Details are in Appendix~\ref{app:exp}.

\begin{table}[t]
\centering
\footnotesize
\begin{tabular}{l l c c }
\toprule
Type & Model & BPC ($\downarrow$) & Time (s) ($\downarrow$)\\ \midrule
Flow & IAF/SCF & 1.88 & 0.04 \\
Flow & Argmax Coup Flow & 1.80 & 0.40 \\
Diffusion & D3PM Uniform & $\le$ 1.61 & 3.60\\
% Diffusion & D3PM (Absorb) & $\le$ 1.45\\
Diffusion & SEDD Uniform  & $\le$ 1.47 & - \\ 
% Diffusion & SEDD (Absorb)  & $\le$ 1.39\\
\midrule
PC & SparsePC   & 2.60  &  - \\
PC & NPC$^2$    & 3.17  & - \\
PC & HMM        &  1.69 &  0.006 \\
PC & Monarch-HMM &  \textbf{1.57} &  0.017 \\
\bottomrule
\end{tabular}
\caption{\label{tab:text8} \textbf{Averaged test set BPC on text8.} Sample times are for generating an example of length 256. Our method outperforms PC baselines, and significantly close the gap between PCs and the other less tractable generative models. }

\end{table}

\subsection{Character-level Language Modeling}
\label{sec:exp-text8}
\noindent\textbf{Dataset~~~} Text8~\cite{mahoney2011text8} is a character-level language modeling dataset with a vocabulary of 27 tokens: the letters `a'-`z' and the whitespace token.  We follow the standard practice of training and evaluating text8 in chunks of length 256 without preprocessing~\cite{hoogeboom2021argmax}. 

\noindent\textbf{Baselines~~~} 
We compare our method with probabilistic circuit structures, including SparsePC~\cite{DangNeurIPS22}, NPC$^2$~\cite{loconte2024subtractive}, and HMM~\cite{ZhangICML23,zhang2024adaptable}, as well as other types of less tractable generative models as a reference, such as flow-based models (IAF/SCF~\cite{Ziegler2019Latent}, Argmax Coupling Flow~\cite{hoogeboom2021argmax}) and diffusion models (D3PM~\cite{austin2021structured}, SEDD~\cite{lou2024discrete}) with uniform transition matrices. The results for SparsePC and NPC$^2$ are obtained by rerunning their official implementations. SparsePC constructs arbitrary sparse structures by iteratively pruning and growing an initial model, which prevents it from scaling to very large hidden size due to system limitations. For HMM, we use dense transition matrices with the largest possible hidden state and run it using our codebase to ensure optimal performance. The results for other generative models are taken from~\cite{austin2021structured} and~\cite{lou2024discrete}.

\noindent\textbf{Benchmark~~~}
We report log-likelihood results in bits-per-character (BPC) in Table~\ref{tab:text8}. The results show that Monarch-HMM outperforms all PC models and significantly narrows the gap between PC models and other less tractable generative models. We improve upon the baseline HMM by replacing dense matrices with structured sparse (Monarch) matrices; thus, we are able to scale up hidden size from $2^{15}$ to $2^{19}$. Additionally, Monarch-HMM achieves significantly faster inference, 200x times faster than diffusion models.

\begin{figure}
    \centering
    \includegraphics[page=1,width=0.8\linewidth]{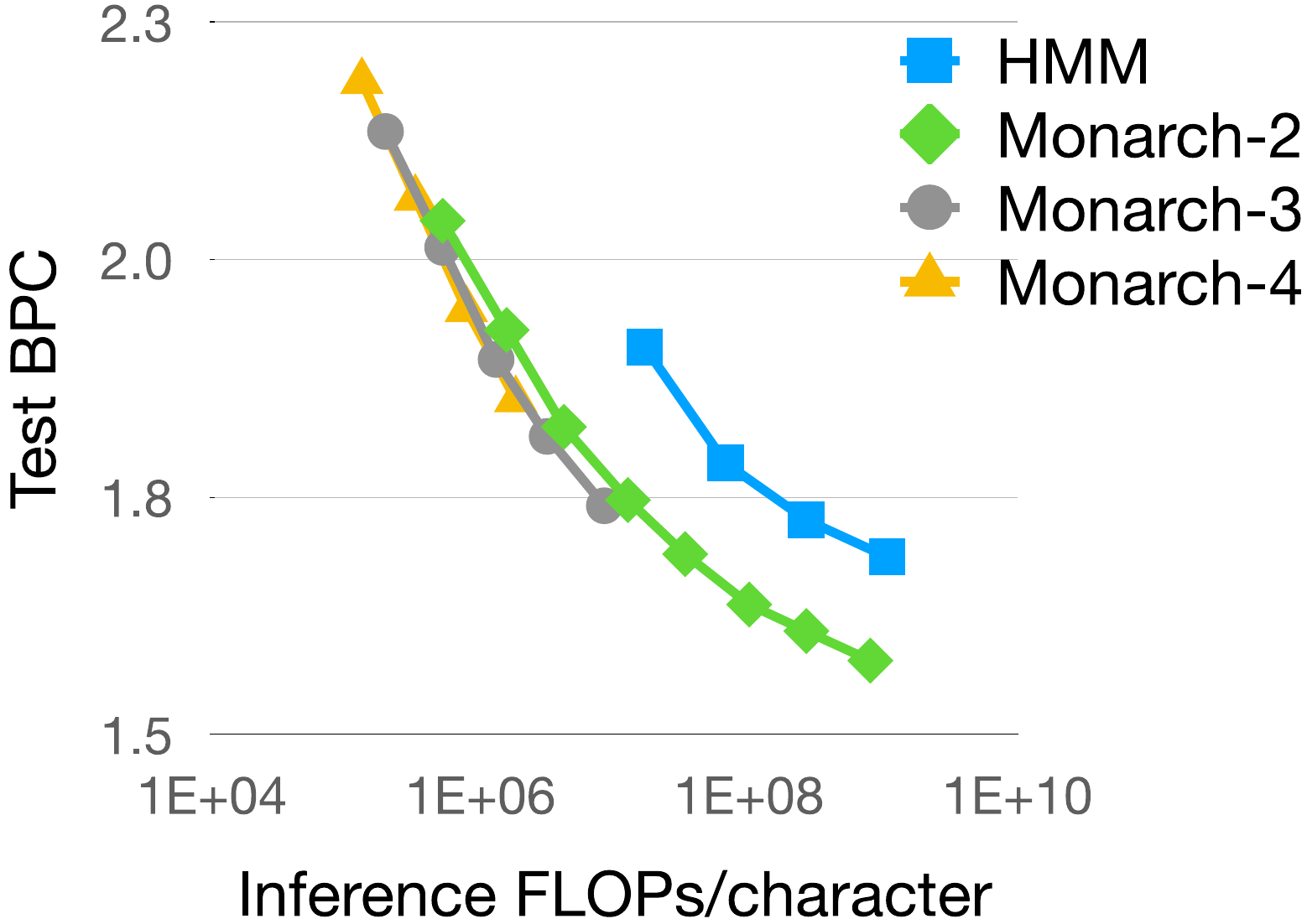}
    \caption{\textbf{ Scaling curves comparing HMM and varying Monarch structures}. BPC ($\downarrow$) as a function of training FLOPs per character.  Monarch-HMM demonstrates greater efficiency than HMM across varying computational budgets. Sparser Monarch further leads to better scaling behaviors.}
    \label{fig:text8-scaling}

\end{figure}

\noindent\textbf{Scaling laws of structured matrices~~~}
We also include a plot in Figure~\ref{fig:text8-scaling} comparing HMM and Monarch-HMM in terms of BPC as a function of inference  FLOPs. Monarch-HMM is trained with hidden size ranging from $2^{12}$ to $2^{19}$. Monarch-HMM demonstrates greater efficiency than HMM across different computational budgets consistently. We additionally include Monarch-3 and Monarch-4, which represent Monarch-HMM with three and four Monarch layers respectively (Section~\ref{sec:method}). Since models with more layers have more sparse structures, this suggests that increased sparsity leads to better scaling behavior. However, sparser structures are constrained by memory consumption (Section~\ref{sec:ablation}), so we use Monarch-2 as our main result in Table~\ref{tab:text8}.

\subsection{Ablation and Analysis}
\label{sec:ablation}
We investigate the relationship between hidden size, FLOPs and memory consumption for varying structures, including dense HMM and differernt layers of Monarch-HMM. From Section~\ref{sec:exp-text8}, we conclude that sparser structures achieve better FLOPs. In this section, we aim to verify the following hypotheses: (1) Initializations from circuit multiplications give better performance. (2) For a fixed hidden size, denser matrices outperform sparse ones but the performance gap dimish quickly as hidden size increases; and (3) Monarch-HMMs with more layers requires significantly higher memory consumption.

\begin{table}[t]
    \centering
    \begin{tabular}{lcccc}
    \toprule
    Hidden Size & $2^{12}$ & $2^{13}$ & $2^{14}$ & $2^{15}$ \\\midrule
    Random Initialization     &  2.25 &	2.10 &	1.98 &	1.88 \\
    PC Multiplication    &  2.14 &	2.01 &	1.90 &	1.81\\\bottomrule
    \end{tabular}
    \caption{Test set BPC ($\downarrow$) of a 3-layer Monarch-HMM for different hidden sizes comparing random initialization and initialization from PC multiplication with 3 dense HMMs.}
    \label{tab:init}
\end{table}

\begin{figure}
    \centering
    \includegraphics[page=2,width=0.8\linewidth]{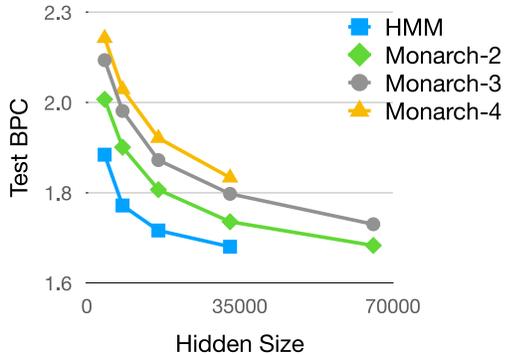}
    \caption{Test set BPC as a function of hidden size for HMM and varying Monarch structures.}
    \label{fig:hiddensize}

\end{figure}

\noindent\textbf{Initialization matters~~~}
The probabilistic interpretation of Monarch matrices as circuit multiplication, as introduced in Section~\ref{sec:method}, provides an effective approach for initializing the parameters of PCs with Monarch layers. We compare the performance of Monarch-HMM trained with two parameter-initialization strategies: random initialization vs. initialization from multiplying dense HMMs. As shown in Table~\ref{tab:init}, circuit multiplication as initialization leads to consistent improvement across all scales.

\noindent\textbf{Scaling up hidden size~~~}
As shown in Figure~\ref{fig:hiddensize}, bits-per-character improves as hidden size increases. 
For a fixed hidden size, denser structures always perform better as a dense matrix can always represent a sparse one by incorporating zero values, while having greater capacity in theory. However, this comes at the cost of increased FLOPs as shown in Figure~\ref{fig:text8-scaling}. Very interestingly, as the hidden size increases, the performance gap between dense and sparse ones diminishes quickly. This suggests that larger PC models are inherently more sparse, making structured sparse layers increasingly effective in capturing the underlying data distribution as we scale up.

\noindent\textbf{Sparse structures and memory consumption~~~} For a probabilistic circuit with either dense or sparse matrices, let $n$ be the sequence length, $h$ the hidden size, $d$ the number of Monarch layers, and $B$ the batch size. The training memory consumption consists of: (1) \emph{parameter caching}, which is also linear with FLOPs per character, requiring $\mathcal{O}(h^2)$ for a dense HMM and $\mathcal{O}(dh^{d+1/d})$ for a $d$ layer Monarch); (2) \emph{gradient caching for backward propagation}, which is proportional to the number of nodes (hidden states), requiring $\mathcal{O}(nhB)$ for a dense HMM and $\mathcal{O}(dnhB)$ for a $d$ layer Monarch. This connection is summarized in Table~\ref{tab:memory}. 
The last column provides an example on GPU memory usage: though Monarch-3/4 have better FLOPs efficiency, training them is impractical due to their memory consumption.

\begin{table}[t]
    \centering
    \begin{tabular}{cccc}
    \toprule
    Model & FLOPs  & Mem Complexity  & GPU Mem\\ 
          & $\mathcal{O}(\cdot)$ & $\mathcal{O}(\cdot)$ & GB \\\midrule
    HMM & $h^2$  &  $nhB\!+\!h^2$ & 288\\
    Monarch 2 & $2h^{3/2}$ & $2nhB\!+\!2h^{3/2}$ & 65 \\
    Monarch 3 & $3h^{4/3}$ & $3nhB\!+\!3h^{4/3}$ & 96 \\
    Monarch 4 & $4h^{5/4}$ & $4nhB\!+\!4h^{5/4}$ & 128 \\\bottomrule
    \end{tabular}
    \caption{\textbf{FLOPS and memory consumption for varying structures. } GPU memory is computed when batch size $B\!=\!128$, sequence length $n\!=\!256$, and hidden size $h\!=\!2^{18}$. }
    \label{tab:memory}
\end{table}

\noindent\textbf{Sparse hidden representations~~~}
\citet{DangNeurIPS22} show that learned dense layers in PCs are inherently sparse, motivating our use of Monarch layers. Using Monarch layers, we scale circuits to larger hidden sizes, where memory consumption from caching hidden states (node values) for backward propagation becomes the new bottleneck ($\mathcal{O}(dnhB)$ in Table~\ref{tab:memory}). We investigate whether these representations are also sparse by pruning hidden states during the forward pass of a Monarch-HMM. Figure~\ref{fig:sparse_h} shows up to 90\% of hidden states across all layers can be pruned without a significant drop in log-likelihood. %This motivates one of our future work: to exploit the sparsity of hidden representations.

\begin{figure}
    \centering
    \includegraphics[page=3,width=0.7\linewidth]{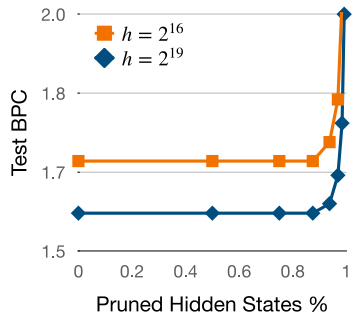}
    \caption{We can effectively prune up to 90\% hidden states with no sigificant performance drop for Monarch-HMM. }
    \label{fig:sparse_h}

\end{figure}

\begin{table}[t]
\centering
\footnotesize
\begin{tabular}{l l c c }
\toprule
Type & Model & Perplexity ($\downarrow$) & Time (s) ($\downarrow$)\\ \midrule
Diffusion & D3PM Uniform & $\le$ 137.9 & 1.82 \\
Diffusion & SEDD Uniform  & $\le$ 40.24 & - \\ 
\midrule
PC & HMM  &  320.78 & 0.0026 \\
PC & Monarch-HMM &  \textbf{190.34} &  0.0095 \\
\bottomrule
\end{tabular}
\caption{\label{tab:lm1b} \textbf{Averaged test set perplexity on LM1B.} Sample times are for generating an example of length 128. Our method outperforms PC baselines, and significantly close the gap between PC and other generative models. }
\end{table}

\subsection{Token-level Language Modeling}

Language modeling for large-scale datasets with large vocabularies using probabilistic circuits has not been previously demonstrated. We present results on the One Billion Word dataset (LM1B)\cite{chelba2013one} as a proof of concept, demonstrating the scalability of Monarch-HMM. Following D3PM\cite{austin2021structured}, all models are trained and evaluated on packed sequences of length 128 using a SentencePiece~\footnote{\url{https://github.com/google/sentencepiece}} vocabulary of size 8192. We use HMM as our baseline, as other probabilistic circuit models are unable to scale to this dataset. Additionally, we include results from diffusion models from~\citet{austin2021structured} and~\citet{lou2024discrete} for reference. Monarch-HMM is trained using the same setup as in the text8 experiments for two epochs. 

We report test set perplexity and sampling tims in Table~\ref{tab:lm1b}. While other PC models cannot scale to this range, Monarch-HMM significantly bridges the gap between probabilistic circuits and diffusion models, while retaining the tractability advantage of PCs over diffusion models. 

\subsection{Image Modeling}

In this section, we conduct experiments on the ImageNet32 and ImageNet64 datasets, which are downscaled $32 \times 32$ and $64 \times 64$ versions of ImageNet \cite{deng2009imagenet}. To improve modeling performance, and following prior work \cite{LiuICLR23,LiuICLR24,LiuICML24,gala2024scaling}, we apply a color transform to the original RGB data, and fit PC models on this transformed data. In particular, we employ a \textit{lossless} YCoCg-R transform \citep{malvar2003ycocg} (see Appendix \ref{app:exp} for details); as such likelihoods on this transformed data are comparable to those on the original RGB dataset. As some prior works on PC learning \citep{LiuICLR23,LiuICML23} employ a  \textit{lossy} quantized YCoCg transform; we also report results using this transform for fair comparison.

To improve training efficiency and enable greater scaling up of hidden size, we split each image into $8 \times 8$ patches and train and evaluate our PCs over these $8 \times 8$ images ($8 \times 8 \times 3 = 192$ dimensions, accounting for the color channels) rather than the full image. We evaluate models using test-set bits-per-dimension (bpd); as this is normalized for dimension, our bpds are directly comparable with bpds on the entire image.

\begin{table}[t]
    \centering
\begin{tabular}{@{}lcccc@{}}
\toprule
        & \multicolumn{2}{c}{ImageNet 32$\times$32} & \multicolumn{2}{c}{ImageNet 64$\times$64} \\
        & Lossy           & Lossless              & Lossy             & Lossless              \\ \midrule
LVD     &          4.39          &            -           &             4.12     &                  -     \\
LVD-PG  &          4.06           &            -           &            3.80         &                   -    \\  \midrule
QPC     &           4.46          &           5.08           &       4.42              &                 5.05      \\
%Dense     &        4.08            &          4.69            &          3.79          &               4.39      \\
Monarch &         \textbf{4.01}           &          \textbf{4.62}          &         \textbf{3.74}            &      \textbf{4.33}               \\ \midrule
%RealNVP $8\times8$ &          3.59         &                      &                     &                    \\ 
RealNVP &          -         &             4.28          &      -              &        3.98               \\
Glow    &          -           &               4.09        &                -     &           3.81            \\
VDM    &       -             &              3.72        &                 -    &             3.40     \\ \bottomrule
\end{tabular}
    \caption{\textbf{Density estimation on image datasets.} Test set log-likelihoods are in bits-per-dimension (lower is better). Our method performs favorably relative to all PC baselines.}
    \label{tab:exp-imagenet}
\end{table}

Our Monarch-HCLTs replace the dense sum blocks with Monarch matrices. In particular, we use a composition of two Monarch-2 layers (as described in \citet{dao2022monarch}) as we found this to improve performance for the image datasets. Analogously with HMMs for language modeling, we use the number of floating point operations per pixel as a measure of model efficiency.  This is $h^2$ for a HCLT and $3h^{3/2}$ for a Monarch-HCLT (the factor of $3$ instead of $2$ is due to the composition). We show the scaling plot for both PC variants on ImageNet64 (YCoCg-R) in Figure \ref{fig:exp-imagenet}. We find that dense PCs plateau quickly in terms of their performance, with the more compute and memory-efficient Monarch layers often having similar performance to the corresponding dense layers with the same number of hidden states. On the other hand, scaling up the number of hidden states further via Monarch parameterizations remains effective; this is because we can use a much larger hidden size with the same compute/memory. 

We further show a comparison between our models and the state-of-the-art PCs on these datasets. For reference, we also show some neural baselines, namely, RealNVP \citep{dinh2017density}, Glow \citep{kingma2018glow}, and VDM \citep{kingma2021variational}. It can be seen that Monarch-HCLTs show improved scaling beyond what is achievable with standard HCLTs. Our largest model, with $m\!=\!16384$ hidden states, achieves state-of-the-art performance for PCs  (Table~\ref{tab:exp-imagenet}), beating even LVD-PG \citep{LiuICML23}, which is an image-specialized PC-based model that uses a high-level PC over $4 \times 4$ patch PCs, together with a complex optimization procedure involving latent variable distillation \citep{LiuICLR23} for initializing the parameters and a progressive growing technique. In contrast, we simply use random initializations, the generic HCLT variable decomposition, and train end-to-end using only the well-established EM algorithm for PCs. 

Despite the improvements achieved by employing Monarch matrices to scale up the hidden state size, the modeling performance still currently lags behind neural image models. Relative to language modeling, this might be because of the lack of an effective homogeneous architecture like HMMs, which limits scaling up the hidden size further. In particular QPCs, which are based on the quad-graph architecture \citep{mari2023unifying}, have significantly worse performance compared to HCLTs. 

\begin{figure}
    \centering
    \includegraphics[page=4,width=0.7\linewidth]{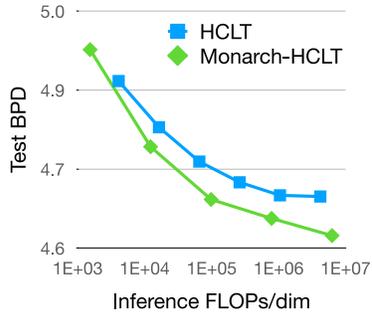}
    \caption{\textbf{Results for Training on ImageNet32 Dataset}. Test bits-per-dimension ($\downarrow$) as a function of training FLOPs per pixel.  Monarch-HCLT demonstrates greater efficiency than HCLT as the computation budget increases.}
    \label{fig:exp-imagenet}

\end{figure}
\section{Related Work}
Our work is connected to recent efforts in the probabilistic circuits community to find more efficient parameterizations of tensorized circuits. While early implementations of tensorized circuits used Kronecker product blocks \citep{PeharzICML20}, the recent trend has been to prefer Hadamard product blocks. \citet{loconte2024relationship} noted that the composition of Kronecker/Hadamard product blocks with sum blocks can be interpreted as Tucker \citep{tucker64extension} / canonical-polyadic (CP) \citep{carroll1970analysis} tensor decompositions respectively. Our work tackles an orthogonal aspect in that it focuses on the sum-to-product connection, which has thus far always been implemented as a dense matrix.

Circuit multiplication is a fundamental operation which has been studied extensively in the probabilistic circuits literature, with work on deriving theoretical conditions for tractability \citep{shen2016tractable,VergariNeurIPS21,wang2024compositional,ZhangAISTATS25}, use as a building block in various applications of PCs \citep{choi2015tractable,KhosraviNeurIPS19,wang2023compositional,zhang2024adaptable}, and improving the expressivity of PCs \citep{loconte2024subtractive,loconte2025sum,wang2025relationship}. Our work shows how to explicitly materialize and implement these products efficiently for tensorized circuits.

There is extensive research on structured matrices for improving neural network efficiency and scalability. \citet{qiu2024compute} study scaling laws across structures and introduce Block Tensor-Train (BTT), which generalizes tensor-train (TT) \cite{oseledets2011tensor} and Monarch matrices, achieving competitive performance with lower computational costs. \citet{potapczynski2024searching} extend this to BTT-MoE, a Mixture-of-Experts model that sparsifies BTT computation. While our generalized Monarch matrices align with the TT-to-BTT generalization (rank-1 case), our approach provides practical probabilistic semantics, enabling PCs with Monarch matrices to be formed by multiplying smaller PCs with dense matrices. Additionally, \citet{sehanobish2024structured} propose structured unrestricted-rank matrices for efficient Transformer fine-tuning.

\section{Conclusion}
We propose scaling up probabilistic circuits by replacing dense sum blocks with structured sparse matrices. Our approach not only provides theoretical insight by establishing the connection between circuit multiplications and structured matrices but also demonstrates significant empirical improvements, including in modeling performance normalized for compute. Future work could examine sparsifying the hidden state representation to overcome the memory bottleneck, as well as utilizing the improved scaling to unlock improvements in downstream applications of PCs such as controllable language generation.

\section*{Acknowledgements}

We thank Anji Liu for assistance with the PyJuice PC library and discussions on image dataset evaluation. 
The Authors acknowledge the National Artificial Intelligence Research Resource (NAIRR) Pilot and TAMU ACES for contributing to this research result.
This work was funded in part by the DARPA ANSR, CODORD, and SAFRON programs under awards FA8750-23-2-0004, HR00112590089, and HR00112530141, NSF grant IIS1943641, and gifts from Adobe Research, Cisco Research, and Amazon. 
Approved for public release; distribution is unlimited.

\section*{Impact Statement}
This paper presents work whose goal is to advance the field of Machine Learning. There are many potential societal consequences of our work, none which we feel must be specifically highlighted here.

\bibliography{refs}

\begin{thebibliography}{57}
\providecommand{\natexlab}[1]{#1}
\providecommand{\url}[1]{\texttt{#1}}
\expandafter\ifx\csname urlstyle\endcsname\relax
  \providecommand{\doi}[1]{doi: #1}\else
  \providecommand{\doi}{doi: \begingroup \urlstyle{rm}\Url}\fi

\bibitem[Ahmed et~al.(2022)Ahmed, Teso, Chang, Van~den Broeck, and Vergari]{ahmed2022semantic}
Ahmed, K., Teso, S., Chang, K.-W., Van~den Broeck, G., and Vergari, A.
\newblock Semantic probabilistic layers for neuro-symbolic learning.
\newblock \emph{Advances in Neural Information Processing Systems}, 35:\penalty0 29944--29959, 2022.

\bibitem[Austin et~al.(2021)Austin, Johnson, Ho, Tarlow, and Van Den~Berg]{austin2021structured}
Austin, J., Johnson, D.~D., Ho, J., Tarlow, D., and Van Den~Berg, R.
\newblock Structured denoising diffusion models in discrete state-spaces.
\newblock In \emph{Advances in Neural Information Processing Systems}, 2021.

\bibitem[Carroll \& Chang(1970)Carroll and Chang]{carroll1970analysis}
Carroll, J.~D. and Chang, J.-J.
\newblock Analysis of individual differences in multidimensional scaling via an n-way generalization of “eckart-young” decomposition.
\newblock \emph{Psychometrika}, 35\penalty0 (3):\penalty0 283--319, 1970.

\bibitem[Chelba et~al.(2013)Chelba, Mikolov, Schuster, Ge, Brants, Koehn, and Robinson]{chelba2013one}
Chelba, C., Mikolov, T., Schuster, M., Ge, Q., Brants, T., Koehn, P., and Robinson, T.
\newblock One billion word benchmark for measuring progress in statistical language modeling.
\newblock \emph{arXiv preprint arXiv:1312.3005}, 2013.

\bibitem[Choi et~al.(2015)Choi, Van~den Broeck, and Darwiche]{choi2015tractable}
Choi, A., Van~den Broeck, G., and Darwiche, A.
\newblock Tractable learning for structured probability spaces: A case study in learning preference distributions.
\newblock In \emph{Proceedings of 24th International Joint Conference on Artificial Intelligence (IJCAI)}, volume 2015, pp.\  2861--2868. IJCAI-INT JOINT CONF ARTIF INTELL, 2015.

\bibitem[Choi et~al.(2020)Choi, Vergari, and Van~den Broeck]{ProbCirc20}
Choi, Y., Vergari, A., and Van~den Broeck, G.
\newblock Probabilistic circuits: A unifying framework for tractable probabilistic models.
\newblock oct 2020.

\bibitem[Choi et~al.(2021)Choi, Dang, and Van~den Broeck]{choi2021group}
Choi, Y., Dang, M., and Van~den Broeck, G.
\newblock Group fairness by probabilistic modeling with latent fair decisions.
\newblock In \emph{Proceedings of the AAAI Conference on Artificial Intelligence}, volume~35, pp.\  12051--12059, 2021.

\bibitem[Dang et~al.(2021)Dang, Khosravi, Liang, Vergari, and Van~den Broeck]{DangAAAI21}
Dang, M., Khosravi, P., Liang, Y., Vergari, A., and Van~den Broeck, G.
\newblock Juice: A julia package for logic and probabilistic circuits.
\newblock In \emph{Proceedings of the 35th AAAI Conference on Artificial Intelligence (Demo Track)}, Feb 2021.

\bibitem[Dang et~al.(2022)Dang, Liu, and Van~den Broeck]{DangNeurIPS22}
Dang, M., Liu, A., and Van~den Broeck, G.
\newblock Sparse probabilistic circuits via pruning and growing.
\newblock In \emph{Advances in Neural Information Processing Systems 35 (NeurIPS)}, dec 2022.

\bibitem[Dao et~al.(2019)Dao, Gu, Eichhorn, Rudra, and R{\'e}]{dao2019learning}
Dao, T., Gu, A., Eichhorn, M., Rudra, A., and R{\'e}, C.
\newblock Learning fast algorithms for linear transforms using butterfly factorizations.
\newblock In \emph{International conference on machine learning}, pp.\  1517--1527. PMLR, 2019.

\bibitem[Dao et~al.(2022)Dao, Chen, Sohoni, Desai, Poli, Grogan, Liu, Rao, Rudra, and R{\'e}]{dao2022monarch}
Dao, T., Chen, B., Sohoni, N.~S., Desai, A., Poli, M., Grogan, J., Liu, A., Rao, A., Rudra, A., and R{\'e}, C.
\newblock Monarch: Expressive structured matrices for efficient and accurate training.
\newblock In \emph{International Conference on Machine Learning}, pp.\  4690--4721. PMLR, 2022.

\bibitem[Darwiche(2003)]{darwiche2003differential}
Darwiche, A.
\newblock A differential approach to inference in bayesian networks.
\newblock \emph{Journal of the ACM (JACM)}, 50\penalty0 (3):\penalty0 280--305, 2003.

\bibitem[Deng et~al.(2009)Deng, Dong, Socher, Li, Li, and Fei-Fei]{deng2009imagenet}
Deng, J., Dong, W., Socher, R., Li, L.-J., Li, K., and Fei-Fei, L.
\newblock Imagenet: A large-scale hierarchical image database.
\newblock In \emph{2009 IEEE conference on computer vision and pattern recognition}, pp.\  248--255. Ieee, 2009.

\bibitem[Dinh et~al.(2017)Dinh, Sohl-Dickstein, and Bengio]{dinh2017density}
Dinh, L., Sohl-Dickstein, J., and Bengio, S.
\newblock Density estimation using real nvp.
\newblock In \emph{International Conference on Learning Representations}, 2017.

\bibitem[Gala et~al.(2024)Gala, de~Campos, Vergari, and Quaeghebeur]{gala2024scaling}
Gala, G., de~Campos, C., Vergari, A., and Quaeghebeur, E.
\newblock Scaling continuous latent variable models as probabilistic integral circuits.
\newblock In \emph{The Thirty-eighth Annual Conference on Neural Information Processing Systems}, 2024.

\bibitem[Hoogeboom et~al.(2021)Hoogeboom, Nielsen, Jaini, Forr{\'e}, and Welling]{hoogeboom2021argmax}
Hoogeboom, E., Nielsen, D., Jaini, P., Forr{\'e}, P., and Welling, M.
\newblock Argmax flows and multinomial diffusion: Towards non-autoregressive language models.
\newblock \emph{arXiv preprint arXiv:2102.05379}, 3\penalty0 (4):\penalty0 5, 2021.

\bibitem[Hu et~al.(2022)Hu, Shen, Wallis, Allen-Zhu, Li, Wang, Wang, and Chen]{hu2022lora}
Hu, E.~J., Shen, Y., Wallis, P., Allen-Zhu, Z., Li, Y., Wang, S., Wang, L., and Chen, W.
\newblock Lo{RA}: Low-rank adaptation of large language models.
\newblock In \emph{International Conference on Learning Representations}, 2022.

\bibitem[Khosravi et~al.(2019)Khosravi, Choi, Liang, Vergari, and Van~den Broeck]{KhosraviNeurIPS19}
Khosravi, P., Choi, Y., Liang, Y., Vergari, A., and Van~den Broeck, G.
\newblock On tractable computation of expected predictions.
\newblock In \emph{Advances in Neural Information Processing Systems 32 (NeurIPS)}, dec 2019.

\bibitem[Kingma et~al.(2021)Kingma, Salimans, Poole, and Ho]{kingma2021variational}
Kingma, D., Salimans, T., Poole, B., and Ho, J.
\newblock Variational diffusion models.
\newblock \emph{Advances in neural information processing systems}, 34:\penalty0 21696--21707, 2021.

\bibitem[Kingma \& Dhariwal(2018)Kingma and Dhariwal]{kingma2018glow}
Kingma, D.~P. and Dhariwal, P.
\newblock Glow: Generative flow with invertible 1x1 convolutions.
\newblock \emph{Advances in neural information processing systems}, 31, 2018.

\bibitem[Kingma \& Welling(2013)Kingma and Welling]{kingma2013auto}
Kingma, D.~P. and Welling, M.
\newblock Auto-encoding variational bayes.
\newblock \emph{arXiv preprint arXiv:1312.6114}, 2013.

\bibitem[Liu \& Van~den Broeck(2021)Liu and Van~den Broeck]{LiuNeurIPS21}
Liu, A. and Van~den Broeck, G.
\newblock Tractable regularization of probabilistic circuits.
\newblock In \emph{Advances in Neural Information Processing Systems 34 (NeurIPS)}, dec 2021.

\bibitem[Liu et~al.(2023{\natexlab{a}})Liu, Zhang, and Van~den Broeck]{LiuICLR23}
Liu, A., Zhang, H., and Van~den Broeck, G.
\newblock Scaling up probabilistic circuits by latent variable distillation.
\newblock In \emph{Proceedings of the International Conference on Learning Representations (ICLR)}, may 2023{\natexlab{a}}.

\bibitem[Liu et~al.(2024{\natexlab{a}})Liu, Ahmed, and Van~den Broeck]{LiuICML24}
Liu, A., Ahmed, K., and Van~den Broeck, G.
\newblock Scaling tractable probabilistic circuits: A systems perspective.
\newblock In \emph{Proceedings of the 41th International Conference on Machine Learning (ICML)}, jul 2024{\natexlab{a}}.

\bibitem[Liu et~al.(2024{\natexlab{b}})Liu, Niepert, and Van~den Broeck]{LiuICLR24}
Liu, A., Niepert, M., and Van~den Broeck, G.
\newblock Image inpainting via tractable steering of diffusion models.
\newblock In \emph{Proceedings of the Twelfth International Conference on Learning Representations (ICLR)}, may 2024{\natexlab{b}}.

\bibitem[Liu et~al.(2023{\natexlab{b}})Liu, Liu, Van~den Broeck, and Liang]{LiuICML23}
Liu, X., Liu, A., Van~den Broeck, G., and Liang, Y.
\newblock Understanding the distillation process from deep generative models to tractable probabilistic circuits.
\newblock In \emph{Proceedings of the 40th International Conference on Machine Learning (ICML)}, jul 2023{\natexlab{b}}.

\bibitem[Loconte et~al.(2024{\natexlab{a}})Loconte, Mari, Gala, Peharz, de~Campos, Quaeghebeur, Vessio, and Vergari]{loconte2024relationship}
Loconte, L., Mari, A., Gala, G., Peharz, R., de~Campos, C., Quaeghebeur, E., Vessio, G., and Vergari, A.
\newblock What is the relationship between tensor factorizations and circuits (and how can we exploit it)?
\newblock \emph{arXiv preprint arXiv:2409.07953}, 2024{\natexlab{a}}.

\bibitem[Loconte et~al.(2024{\natexlab{b}})Loconte, Sladek, Mengel, Trapp, Solin, Gillis, and Vergari]{loconte2024subtractive}
Loconte, L., Sladek, A., Mengel, S., Trapp, M., Solin, A., Gillis, N., and Vergari, A.
\newblock Subtractive mixture models via squaring: Representation and learning.
\newblock In \emph{International Conference on Learning Representations (ICLR)}, 2024{\natexlab{b}}.

\bibitem[Loconte et~al.(2025)Loconte, Mengel, and Vergari]{loconte2025sum}
Loconte, L., Mengel, S., and Vergari, A.
\newblock Sum of squares circuits.
\newblock In \emph{Proceedings of the AAAI Conference on Artificial Intelligence}, volume~39, pp.\  19077--19085, 2025.

\bibitem[Lou et~al.(2024)Lou, Meng, and Ermon]{lou2024discrete}
Lou, A., Meng, C., and Ermon, S.
\newblock Discrete diffusion modeling by estimating the ratios of the data distribution.
\newblock In \emph{Forty-first International Conference on Machine Learning}, 2024.

\bibitem[Maene et~al.(2024)Maene, Derkinderen, and Martires]{maene2024klay}
Maene, J., Derkinderen, V., and Martires, P. Z.~D.
\newblock Klay: Accelerating neurosymbolic ai.
\newblock \emph{arXiv preprint arXiv:2410.11415}, 2024.

\bibitem[Mahoney(2011)]{mahoney2011text8}
Mahoney, M.
\newblock Text8 dataset, 2011.
\newblock URL \url{http://mattmahoney.net/dc/text8.zip}.
\newblock Accessed: 2024-12-1.

\bibitem[Malvar \& Sullivan(2003)Malvar and Sullivan]{malvar2003ycocg}
Malvar, H. and Sullivan, G.
\newblock Ycocg-r: A color space with rgb reversibility and low dynamic range.
\newblock \emph{ISO/IEC JTC1/SC29/WG11 and ITU-T SG16 Q}, 6, 2003.

\bibitem[Mari et~al.(2023)Mari, Vessio, and Vergari]{mari2023unifying}
Mari, A., Vessio, G., and Vergari, A.
\newblock Unifying and understanding overparameterized circuit representations via low-rank tensor decompositions.
\newblock In \emph{The 6th Workshop on Tractable Probabilistic Modeling}, 2023.

\bibitem[Meng et~al.(2022)Meng, Zhou, Choi, Dao, and Ermon]{meng2022butterflyflow}
Meng, C., Zhou, L., Choi, K., Dao, T., and Ermon, S.
\newblock Butterflyflow: Building invertible layers with butterfly matrices.
\newblock In \emph{International Conference on Machine Learning}, pp.\  15360--15375. PMLR, 2022.

\bibitem[Oseledets(2011)]{oseledets2011tensor}
Oseledets, I.~V.
\newblock Tensor-train decomposition.
\newblock \emph{SIAM Journal on Scientific Computing}, 33\penalty0 (5):\penalty0 2295--2317, 2011.

\bibitem[Papamakarios et~al.(2021)Papamakarios, Nalisnick, Rezende, Mohamed, and Lakshminarayanan]{papamakarios2021normalizing}
Papamakarios, G., Nalisnick, E., Rezende, D.~J., Mohamed, S., and Lakshminarayanan, B.
\newblock Normalizing flows for probabilistic modeling and inference.
\newblock \emph{Journal of Machine Learning Research}, 2021.

\bibitem[Parker(1995)]{parker1995random}
Parker, D.
\newblock \emph{Random Butterfly Transformations with Applications in Computational Linear Algebra}.
\newblock CSD (Series). UCLA Computer Science Department, 1995.

\bibitem[Peharz et~al.(2016)Peharz, Gens, Pernkopf, and Domingos]{peharz2016latent}
Peharz, R., Gens, R., Pernkopf, F., and Domingos, P.
\newblock On the latent variable interpretation in sum-product networks.
\newblock \emph{IEEE transactions on pattern analysis and machine intelligence}, 39\penalty0 (10):\penalty0 2030--2044, 2016.

\bibitem[Peharz et~al.(2020)Peharz, Lang, Vergari, Stelzner, Molina, Trapp, Van~den Broeck, Kersting, and Ghahramani]{PeharzICML20}
Peharz, R., Lang, S., Vergari, A., Stelzner, K., Molina, A., Trapp, M., Van~den Broeck, G., Kersting, K., and Ghahramani, Z.
\newblock Einsum networks: Fast and scalable learning of tractable probabilistic circuits.
\newblock In \emph{Proceedings of the 37th International Conference on Machine Learning (ICML)}, jul 2020.

\bibitem[Potapczynski et~al.(2024)Potapczynski, Qiu, Finzi, Ferri, Chen, Goldblum, Bruss, Sa, and Wilson]{potapczynski2024searching}
Potapczynski, A., Qiu, S., Finzi, M.~A., Ferri, C., Chen, Z., Goldblum, M., Bruss, C.~B., Sa, C.~D., and Wilson, A.~G.
\newblock Searching for efficient linear layers over a continuous space of structured matrices.
\newblock In \emph{The Thirty-eighth Annual Conference on Neural Information Processing Systems}, 2024.

\bibitem[Qiu et~al.(2024)Qiu, Potapczynski, Finzi, Goldblum, and Wilson]{qiu2024compute}
Qiu, S., Potapczynski, A., Finzi, M.~A., Goldblum, M., and Wilson, A.~G.
\newblock Compute better spent: Replacing dense layers with structured matrices.
\newblock In \emph{Forty-first International Conference on Machine Learning}, 2024.

\bibitem[Sehanobish et~al.(2024)Sehanobish, Dubey, Choromanski, Chowdhury, Jain, Sindhwani, and Chaturvedi]{sehanobish2024structured}
Sehanobish, A., Dubey, K.~A., Choromanski, K.~M., Chowdhury, S. B.~R., Jain, D., Sindhwani, V., and Chaturvedi, S.
\newblock Structured unrestricted-rank matrices for parameter efficient finetuning.
\newblock In \emph{The Thirty-eighth Annual Conference on Neural Information Processing Systems}, 2024.

\bibitem[Shen et~al.(2016)Shen, Choi, and Darwiche]{shen2016tractable}
Shen, Y., Choi, A., and Darwiche, A.
\newblock Tractable operations for arithmetic circuits of probabilistic models.
\newblock \emph{Advances in Neural Information Processing Systems}, 29, 2016.

\bibitem[Tucker(1964)]{tucker64extension}
Tucker, L.~R.
\newblock {T}he extension of factor analysis to three-dimensional matrices.
\newblock In Gulliksen, H. and Frederiksen, N. (eds.), \emph{{C}ontributions to mathematical psychology.}, pp.\  110--127. Holt, Rinehart and Winston, New York, 1964.

\bibitem[Vergari et~al.(2021)Vergari, Choi, Liu, Teso, and Van~den Broeck]{VergariNeurIPS21}
Vergari, A., Choi, Y., Liu, A., Teso, S., and Van~den Broeck, G.
\newblock A compositional atlas of tractable circuit operations for probabilistic inference.
\newblock In \emph{Advances in Neural Information Processing Systems 34 (NeurIPS)}, dec 2021.

\bibitem[Wang \& Kwiatkowska(2023)Wang and Kwiatkowska]{wang2023compositional}
Wang, B. and Kwiatkowska, M.
\newblock Compositional probabilistic and causal inference using tractable circuit models.
\newblock In \emph{International Conference on Artificial Intelligence and Statistics}, pp.\  9488--9498. PMLR, 2023.

\bibitem[Wang \& Van~den Broeck(2025)Wang and Van~den Broeck]{wang2025relationship}
Wang, B. and Van~den Broeck, G.
\newblock On the relationship between monotone and squared probabilistic circuits.
\newblock In \emph{Proceedings of the AAAI Conference on Artificial Intelligence}, volume~39, pp.\  21026--21034, 2025.

\bibitem[Wang et~al.(2021{\natexlab{a}})Wang, Lyle, and Kwiatkowska]{wang2021provable}
Wang, B., Lyle, C., and Kwiatkowska, M.
\newblock Provable guarantees on the robustness of decision rules to causal interventions.
\newblock In \emph{Proceedings of the International Joint Conference on Artificial Intelligence}. International Joint Conferences on Artificial Intelligence Organization, 2021{\natexlab{a}}.

\bibitem[Wang et~al.(2022)Wang, Wicker, and Kwiatkowska]{wang2022tractable}
Wang, B., Wicker, M.~R., and Kwiatkowska, M.
\newblock Tractable uncertainty for structure learning.
\newblock In \emph{International Conference on Machine Learning}, pp.\  23131--23150. PMLR, 2022.

\bibitem[Wang et~al.(2024)Wang, Mau{\'a}, Van~den Broeck, and Choi]{wang2024compositional}
Wang, B., Mau{\'a}, D., Van~den Broeck, G., and Choi, Y.
\newblock A compositional atlas for algebraic circuits.
\newblock \emph{Advances in Neural Information Processing Systems}, 37:\penalty0 141318--141355, 2024.

\bibitem[Wang et~al.(2021{\natexlab{b}})Wang, Khosravi, and Van~den Broeck]{WangIJCAI21}
Wang, E., Khosravi, P., and Van~den Broeck, G.
\newblock Probabilistic sufficient explanations.
\newblock In \emph{Proceedings of the 30th International Joint Conference on Artificial Intelligence (IJCAI)}, aug 2021{\natexlab{b}}.

\bibitem[Ze{\v{c}}evi{\'c} et~al.(2021)Ze{\v{c}}evi{\'c}, Dhami, Karanam, Natarajan, and Kersting]{zevcevic2021interventional}
Ze{\v{c}}evi{\'c}, M., Dhami, D., Karanam, A., Natarajan, S., and Kersting, K.
\newblock Interventional sum-product networks: Causal inference with tractable probabilistic models.
\newblock \emph{Advances in neural information processing systems}, 34:\penalty0 15019--15031, 2021.

\bibitem[Zhang et~al.(2023)Zhang, Dang, Peng, and Van~den Broeck]{ZhangICML23}
Zhang, H., Dang, M., Peng, N., and Van~den Broeck, G.
\newblock Tractable control for autoregressive language generation.
\newblock In \emph{Proceedings of the 40th International Conference on Machine Learning (ICML)}, jul 2023.

\bibitem[Zhang et~al.(2024)Zhang, Kung, Yoshida, den Broeck, and Peng]{zhang2024adaptable}
Zhang, H., Kung, P.-N., Yoshida, M., den Broeck, G.~V., and Peng, N.
\newblock Adaptable logical control for large language models.
\newblock In \emph{The Thirty-eighth Annual Conference on Neural Information Processing Systems}, 2024.

\bibitem[Zhang et~al.(2025)Zhang, Wang, Arenas, and Van~den Broeck]{ZhangAISTATS25}
Zhang, H., Wang, B., Arenas, M., and Van~den Broeck, G.
\newblock Restructuring tractable probabilistic circuits.
\newblock In \emph{Proceedings of the 28th International Conference on Artificial Intelligence and Statistics (AISTATS)}, may 2025.
\newblock URL \url{https://arxiv.org/pdf/2411.12256}.

\bibitem[Ziegler \& Rush(2019)Ziegler and Rush]{Ziegler2019Latent}
Ziegler, Z. and Rush, A.
\newblock Latent normalizing flows for discrete sequences.
\newblock In \emph{Proceedings of the 36th International Conference on Machine Learning}, 2019.

\end{thebibliography}
\bibliographystyle{icml2025}

%%%%%%%%%%%%%%%%%%%%%%%%%%%%%%%%%%%%%%%%%%%%%%%%%%%%%%%%%%%%%%%%%%%%%%%%%%%%%%%
%%%%%%%%%%%%%%%%%%%%%%%%%%%%%%%%%%%%%%%%%%%%%%%%%%%%%%%%%%%%%%%%%%%%%%%%%%%%%%%
% APPENDIX
%%%%%%%%%%%%%%%%%%%%%%%%%%%%%%%%%%%%%%%%%%%%%%%%%%%%%%%%%%%%%%%%%%%%%%%%%%%%%%%
%%%%%%%%%%%%%%%%%%%%%%%%%%%%%%%%%%%%%%%%%%%%%%%%%%%%%%%%%%%%%%%%%%%%%%%%%%%%%%%
\newpage
\appendix
\onecolumn

\section{Interpolating Butterfly and Monarch Matrices}
\label{sec:interpolation}

Butterfly matrices \citep{dao2019learning,meng2022butterflyflow} are a class of expressive structured matrices. They are constructed as the product of sparse matrices known as butterfly factor matrices \citep{parker1995random}. In this section, we will show how Butterfly matrices (products of sparse Butterfly factor matrices) can be viewed as generalized Monarch matrices (products of dense high-dimensional tensors), as stated in Theorem \ref{thm:butterflymonarch}. For ease of exposition, we will describe here Butterfly factor matrices of size $D \times D$, where $D$ is a power of 2. 

\begin{definition}[Butterfly Factor]
    Given any $D=2^d$ and $1 \leq i \leq d$, a butterfly factor matrix $B(i, D)$ is a sparse matrix with the following sparsity pattern:
    \begin{itemize}
        \item $B(1, D)$ has non-zero elements only along the diagonals of the four $\frac{D}{2} \times \frac{D}{2}$ submatrices.
        \item $B(i, D)$ is a block-diagonal matrix with block size $\frac{D}{2^{i-1}} \times \frac{D}{2^{i-1}}$, where each block is a $B\left(1, \frac{D}{2^{i-1}}\right)$ butterfly factor.
    \end{itemize}
\end{definition}

\begin{figure}[h]
    \centering
    \includegraphics[width=0.7\linewidth]{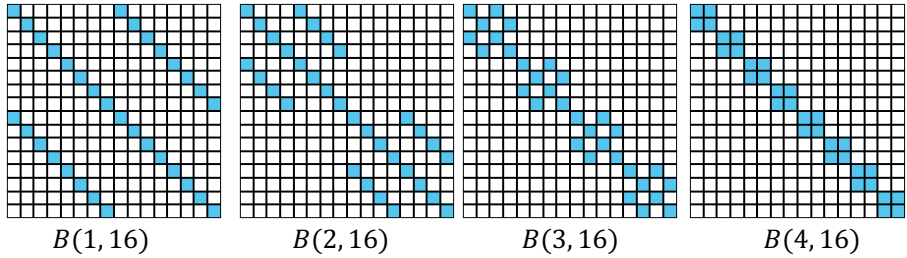}
    \caption{Illustration of Butterfly factors \citep{meng2022butterflyflow}.}
    \label{fig:butter_block_app}
\end{figure}

Examples of butterfly factors for $D=16$ are shown in Figure \ref{fig:butter_block_app}. Notice that each butterfly factor $B(i, D)$ for $i \in [d]$ has $2D$ nonzero elements, but different sparsity patterns. We now present a new interpretation of butterfly factors as a \emph{reshaping} of a \emph{dense} $(d+1)$-dimensional tensor of shape $2 \times 2 \times \dots \times 2$. In particular, consider the following reshaping:

\begin{definition}[Butterfly Unfurling]
    Let $A_{k_1, ..., k_{d+1}}$ be a $(d+1)$-dimensional $2 \times 2 \times ... \times 2$ tensor. We define the $i^{\text{th}}$ \emph{butterfly unfurling} of $A$ to be the $2^{d} \times 2^{d}$ matrix defined by:
    \begin{equation} \label{eqn:unfurling}
        B_{j_1j_2...j_d, j_1'j_2'...j_d'} := A_{j_i, j_1, ..., j_{i-1}, j_{i+1}, ..., j_{d}, j_i'} \prod_{c \neq i} \delta_{j_c}^{j_c'}
    \end{equation}
    where we interpret e.g. $j_1j_2...j_d$ as the integer in $\{0, ..., 2^{d-1}\}$ represented by the binary string, and $\delta_{j_c}^{j_c'}$ is equal to $1$ if $j_c = j_c'$ and $0$ otherwise.
\end{definition}

Intuitively, this takes the first and last dimensions of $A$ and inserts them into the $i^{\text{th}}$ bit of the row and column of $B$ respectively; while all other bits of $B$ between the rows and columns are tied. This produces the sparse structure of a Butterfly matrix. For instance, the $d^{\text{th}}$ butterfly unfurling creates a block-diagonal matrix with block size $2$. 
\begin{proposition}
    The $i^{\text{th}}$ butterfly unfurling of a $(d+1)$-dimensional tensor is a butterfly factor matrix $B(i, 2^d)$.
\end{proposition}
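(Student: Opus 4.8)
The plan is to unpack the definition of the $i^{\text{th}}$ butterfly unfurling and check, directly from the recursive definition of $B(i, 2^d)$, that the two sparsity patterns (and values) coincide. The natural approach is induction on $i$, running from $i = d$ down to $i = 1$, since the recursive clause for $B(i, D)$ refers to $B(1, D/2^{i-1})$, and the base case $B(1, \cdot)$ is the one given explicitly.

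First I would establish the base case $i = d$. Here Equation~\eqref{eqn:unfurling} reads $B_{j_1\dots j_d, j_1'\dots j_d'} = A_{j_d, j_1, \dots, j_{d-1}, j_d'} \prod_{c \neq d} \delta_{j_c}^{j_c'}$, so the entry is nonzero only when $j_c = j_c'$ for all $c < d$, i.e. when the row and column indices agree on their top $d-1$ bits. That is exactly a block-diagonal matrix with $2^{d-1}$ blocks of size $2 \times 2$, and each $2\times 2$ block (indexed by the common prefix $j_1 \dots j_{d-1}$) has entries $A_{j_d, j_1, \dots, j_{d-1}, j_d'}$ as $j_d, j_d'$ range over $\{0,1\}$ — a full dense $2\times 2$ block. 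By the recursive definition, $B(d, 2^d)$ is precisely block-diagonal with $2\times 2$ blocks each equal to a $B(1,2)$ factor, and $B(1,2)$ is an arbitrary $2\times 2$ matrix (its four ``diagonals of the four $1\times 1$ submatrices'' are just its four entries). So the base case matches, with the dense tensor $A$ supplying the block entries.

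For the inductive step, suppose the claim holds for unfurling index $i+1$ on tensors of one lower relevant dimension; I want to show the $i^{\text{th}}$ unfurling of a $2^{\times(d+1)}$ tensor $A$ is $B(i, 2^d)$. From \eqref{eqn:unfurling}, fixing $i$, the constraint $\prod_{c\neq i}\delta_{j_c}^{j_c'} = 1$ forces $j_c = j_c'$ for all $c \neq i$; in particular $j_c = j_c'$ for $c < i$, so the matrix is block-diagonal with $2^{i-1}$ blocks of size $2^{d-i+1} \times 2^{d-i+1} = \frac{D}{2^{i-1}} \times \frac{D}{2^{i-1}}$, indexed by the common prefix $(j_1, \dots, j_{i-1})$. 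It then remains to identify each such block, as a function of the bits $(j_i j_{i+1} \dots j_d)$ and $(j_i' j_{i+1}' \dots j_d')$, as a $B\!\left(1, \frac{D}{2^{i-1}}\right)$ butterfly factor. Within a block the remaining constraints are $j_c = j_c'$ for $i < c \leq d$ and the entry value is $A_{j_i, j_1, \dots, j_{i-1}, j_{i+1}, \dots, j_d, j_i'}$; holding the prefix fixed, this is exactly the defining pattern of $B(1, 2^{d-i+1})$ — nonzero only on the diagonals of the four $\frac{1}{2}\cdot\frac{D}{2^{i-1}}$-sized submatrices obtained by splitting on the leading bit $j_i$ vs. $j_i'$, with the diagonal entries (indexed by the tied suffix $j_{i+1}\dots j_d$) free. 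One can phrase this cleanly either by peeling off the leading bit $j_i, j_i'$ directly (matching the $B(1,\cdot)$ clause in one shot), or by invoking the induction hypothesis applied blockwise to the reduced tensor $A_{j_i, \bullet, \dots}$.

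The main obstacle is purely bookkeeping: keeping the index permutation in \eqref{eqn:unfurling} straight — in particular that the first tensor axis lands in the $i^{\text{th}}$ row-bit, the last tensor axis in the $i^{\text{th}}$ column-bit, and the middle axes fill the shared bits in order — and making sure the recursion on $B(i,D)$ is threaded consistently with the reshaping. There is no real analytic content; the proof is a careful matching of two block-diagonal/diagonal sparsity patterns, so the writeup should emphasize a clear notational setup (binary expansion of row/column indices) and then let the two definitions line up termwise.
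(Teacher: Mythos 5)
Your proof is correct and, despite the induction framing, is essentially the paper's argument: the delta constraints on bits $c<i$ force block-diagonality with blocks of size $2^{d-i+1}$ indexed by the shared prefix, and within each block the ties on bits $c>i$ together with the free pair $j_i, j_i'$ reproduce exactly the $B\left(1, 2^{d-i+1}\right)$ pattern. The induction scaffolding is never genuinely used (and as stated---hypothesis at index $i+1$ on a tensor of one lower dimension---it is not quite what a diagonal block reduces to, namely a $1^{\text{st}}$ unfurling of a smaller tensor), so your direct blockwise verification is the whole proof and coincides with the paper's.
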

\begin{proof}
    We begin by considering the $1^{\text{st}}$ butterfly unfurling, which is:
    \begin{align*}
        B_{j_1j_2...j_d, j_1'j_2'...j_d'} := A_{j_1, j_2, ..., j_{d}, j_1'} \prod_{c \neq 1} \delta_{j_c}^{j_c'}
    \end{align*}
     Notice that the $j_1, j'_1$ are the most significant bits of the row/column respectively, and as each value of $(j_1, j_1')$ corresponds to one of the $D/2 \times D/2$ shaped submatrices. The entry $B_{j_1j_2...j_d, j_1'j_2'...j_d'}$ is $0$ unless $j_c = j'_c$ for all $c \neq 1$, i.e. if the entry lies along the diagonal of the submatrix.

     Now let us consider the $i^{\text{th}}$ butterfly unfurling for $i \neq 1$:
     \begin{align*}
        &B_{j_1j_2...j_d, j_1'j_2'...j_d'} := A_{j_i, j_1, ..., j_{i-1}, j_{i+1}, ..., j_{d}, j_i'} \prod_{c \neq i} \delta_{j_c}^{j_c'}
    \end{align*}
    Notice that, as the most significant bits, $j_1,...,j_{i-1}$ and $j_1', ..., j'_{i-1}$ index the submatrices of size $\frac{D}{2^{i-1}}$. Note that the entry $B_{j_1j_2...j_d, j_1'j_2'...j_d'}$ is zero unless $j_c = j_c'$ for all $c = 1, .., i-1$; thus only the block submatrices along the diagonal can have nonzero elements. Now let us consider any such submatrix; w.l.o.g. let us choose $j_1 = ... = j_{i-1} = j'_1 = ... = j'_{i-1} = 0$. Then we have:
    \begin{align*}
        &B_{00...0j_{i}...j_d, 00...0j'_i...j_d'} := A_{j_i, 00..0j_{i+1}, ..., j_{d}, j_i'} \prod_{c > i} \delta_{j_c}^{j_c'}
    \end{align*}
    This submatrix satisfies the sparsity pattern of the $1^{st}$ butterfly unfolding of a $d-i+1$ dimensional tensor, i.e. is a butterfly factor matrix $B(1, \frac{D}{2^{i-1}})$ as required.
\end{proof}

Notice that, since a butterfly unfurling is just a reshaping, given any butterfly factor $B(i, D)$ we can invert the unfurling operation and get a dense tensor $A(i, D)$.

\paragraph{Butterfly matrices} Butterfly matrices are constructed as the product of butterfly factors of a given dimension, i.e. $B(2^d) := B(1, 2^d) B(2, 2^d) \ldots B(d, 2^d)$. If we instead parameterize these matrices using a butterfly unfurling, we get a product of dense tensors with some repeated indices. For example, when $d = 2$, we get the following expression: %when $B(4)$ %is applied to an input vector $x \in \mathbb{R}^{4}$:
\begin{align*}
    B(4)_{j_1j_2, j_5j_6}
    &= \sum_{j_3, j_4} B(1,4)_{j_1j_2, j_3j_4} B(2, 4)_{j_3j_4,j_5j_6} \\
    &= \sum_{j_3, j_4} A(1,4)_{j_1,j_2,j_3} \delta_{j_2}^{j_4} A(2, 4)_{j_4,j_3,j_6} \delta_{j_3}^{j_5}\\
    &= A(1,4)_{j_1,j_2,j_3}  A(2, 4)_{j_2,j_5,j_6}
\end{align*}

We can generalize this idea to higher dimensions:

\begin{theorem}
    Any butterfly matrix $B(2^d)$ matrix of size $2^d \times 2^d$ can be written as:
    \begin{align*}
         &B(2^d)_{j_1\ldots,j_d, j_{d+1}\ldots j_{2d}} \\ &= A(1, 2^d)_{j_1,\ldots,j_{d+1}} A(2, 2^d)_{j_2,\ldots,j_{d+1}}  \ldots  A(d, 2^d)_{j_d,\ldots, j_{2d}}
    \end{align*}
    where each $A(i, 2^d)$ is a $2 \times 2 \times ... \times 2$ dense tensor of dimension $d+1$.
\end{theorem}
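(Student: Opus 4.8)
### Proof Proposal

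The plan is to prove the claim by induction on $d$, unfurling each butterfly factor $B(i,2^d)$ into its dense tensor $A(i,2^d)$ and then carefully collapsing all of the Kronecker deltas that appear when we write out the matrix product in bit-index notation. The base case $d=1$ is trivial ($B(2) = B(1,2) = A(1,2)$, a $2\times 2$ tensor). The worked example for $d=2$ in the excerpt already shows the mechanism: substituting the unfurling formula (Equation~\ref{eqn:unfurling}) for each factor produces a product over $i$ of terms $A(i,2^d)_{j_i, j_1,\ldots,j_{i-1},j_{i+1},\ldots,j_d, j_i'}$ together with a big product of deltas $\prod_{i}\prod_{c\neq i}\delta_{j_c^{(i-1)}}^{j_c^{(i)}}$, where $j^{(t)}$ denotes the bit-string of the $t$-th intermediate index being summed over. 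The deltas force almost all of these intermediate indices to be equal, and the summations collapse.

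The key bookkeeping step is to set up the indices correctly. Label the $2d$ external indices $j_1,\ldots,j_d$ (row bits) and $j_{d+1},\ldots,j_{2d}$ (column bits), and introduce $d-1$ summed intermediate bit-strings between consecutive factors. When we write $B(2^d) = \prod_{i=1}^d B(i,2^d)$ and replace each $B(i,2^d)$ by its unfurling, the unfurling of $B(i,2^d)$ "reads" bit $i$ from its input index and "writes" bit $i$ to its output index via $A(i,2^d)$, while all other bits $c\neq i$ are passed through unchanged by a delta. Chaining these together, bit $c$ of the final column index equals bit $c$ of the intermediate index produced right after factor $c$ acted on it; tracing backwards, one finds that the $c$-th argument slot of $A(i,2^d)$ for $i<c$ should be $j_c$ (the row bit, since factors $1,\ldots,c-1$ haven't touched bit $c$ yet), and for $i>c$ it should be $j_{d+c}$ (the column bit, since factor $c$ has already written it). This is exactly the pattern $A(i,2^d)_{j_i,\ldots,j_{i-1},j_{i+1},\ldots,j_{2d}}$ claimed in the statement — more precisely, $A(i,2^d)$ is evaluated at $(j_i; j_1,\ldots,j_{i-1}, j_{d+i+1},\ldots,j_{2d}; j_{d+i})$, which after fixing the index convention matches the displayed formula.

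I would carry this out in three steps: (1) state the induction hypothesis and verify the base case; (2) in the inductive step, split $B(2^d) = B(1,2^d)\,\big(B(2,2^d)\cdots B(d,2^d)\big)$ and recognize the second factor as a block-diagonal matrix whose blocks are (up to reindexing) butterfly matrices $B(2^{d-1})$, so the inductive hypothesis applies inside each block; then fold in $B(1,2^d)$ via its unfurling and collapse the resulting single layer of deltas; (3) read off that the combined expression has the claimed product form, checking that the argument slots line up. Alternatively — and perhaps cleaner — one can do it in one shot: substitute all unfurlings simultaneously, observe that the delta $\prod_{i}\prod_{c\neq i}\delta_{(\cdot)}^{(\cdot)}$ identifies, for each bit position $c$, all intermediate values of that bit after the first time a factor writes to it, forcing the summed indices to take the forced values and leaving no free summation.

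The main obstacle is purely notational: managing $d-1$ distinct summed bit-strings each of length $d$, and convincing the reader that the cascade of deltas resolves to precisely the index pattern in the theorem rather than some permuted version of it. Getting the left-shifting/permutation conventions consistent with Definition~\ref{def:monarch2} (so that this really is an instance of the generalized $d$-layer Monarch matrix with all $m_t = n_t = 2$, which is what Theorem~\ref{thm:butterflymonarch} asserts) requires care but no new ideas. Once the index convention is pinned down, each delta-collapse is a one-line computation, so I would present the $d=2$ case in full (as the excerpt does) and then state the general index-chasing argument compactly, relying on the reader to expand it.
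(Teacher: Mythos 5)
Your core mechanism is the same as the paper's: substitute the unfurling formula for every factor in $B(1,2^d)B(2,2^d)\cdots B(d,2^d)$, introduce intermediate bit-strings between consecutive factors, and let the Kronecker deltas collapse them. Your ``one-shot'' alternative is literally the paper's proof, and your induction-on-$d$ framing (peel off $B(1,2^d)$, note that $B(2,2^d)\cdots B(d,2^d)$ is block-diagonal with two blocks that are butterfly matrices $B(2^{d-1})$ with independent parameters, apply the hypothesis per block) is a workable repackaging of the same index chase; it just adds the bookkeeping of how the block bit --- which the deltas force to equal $j_{d+1}$ --- is adjoined to each smaller tensor to make it $(d+1)$-dimensional.

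There is, however, a concrete index error in your ``more precisely'' evaluation point, and it contradicts the (correct) rule you state in the sentence just before it. From that rule --- pass-through slot $c$ of $A(i,2^d)$ carries the row bit $j_c$ when $c>i$ and the column bit $j_{d+c}$ when $c<i$ --- the evaluation point is $(j_i;\, j_{d+1},\dots,j_{d+i-1},\, j_{i+1},\dots,j_d;\, j_{d+i})$, whose entries are exactly $j_i,\dots,j_{d+i}$ up to permutation, as the paper's proof states. The tuple you wrote, $(j_i;\, j_1,\dots,j_{i-1},\, j_{d+i+1},\dots,j_{2d};\, j_{d+i})$, makes the reverse assignment (row bits in the $c<i$ slots, column bits in the $c>i$ slots) and does not even lie within the range $j_i,\dots,j_{d+i}$; the paper's $d=2$ computation already rules it out, since there $A(2,4)$ is evaluated at $(j_2,j_5,j_6)$ --- with the \emph{column}-side bit $j_5$ in its pass-through slot --- rather than at $(j_2,j_1,j_6)$. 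Since the entire content of the theorem is this index pattern, fix the tuple to match your stated rule; with that correction your argument goes through exactly as in the paper.
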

\begin{proof}
    Essentially this is replacing each butterfly factor with its furled version, and by inspection of the unfurling formula in Equation \ref{eqn:unfurling}. For each of the butterfly factors, let us write:
    \begin{equation*}
        B(i, 2^d)_{j^{(i)}_1j^{(i)}_2...j^{(i)}_d, j^{(i+1)}_1j^{(i+1)}_2...j^{(i+1)}_d} := A(i, 2^d)_{j^{(i)}_i, j^{(i)}_1, ..., j^{(i)}_{i-1}, j^{(i)}_{i+1}, ..., j^{(i)}_{d}, j^{(i+1)}_i} \prod_{c \neq i} \delta_{j^{(i)}_c}^{j^{(i+1)}_c}
    \end{equation*}
    where we have attached a superscript to each index to indicate it is associated with the $i^{\text{th}}$ butterfly factor; in particular,  $j_c^{(1)} = j_c$ and $j_c^{(d)} = j_{d+c}$.
    
    Notice that the delta functions require (for a non-zero entry) that $j_c^{(i)} = j_c^{(i+1)}$ whenever $c \neq i$. By iterative application of this rule, one can see that the indices of $A(i, 2^d)$ correspond to $j_i, ..., j_{i+d}$ (up to some permutation). For example, the first index $j_i^{(i)} = j_{i}^{(i-1)} = ... = j_{i}^{(1)} = j_i$. In general, $j_c^{(i)} = j_{d+c}$ whenever $c < i$, and $j_c^{(i)} = j_c$ whenever $c \geq i$.
\end{proof}

Interpreting $x_{j_3j_6}$ as a matrix $x_{j_3, j_6}$, we get a sequence of tensor contractions matching the structure of the generalized Monarch matrices. By generalizing this idea to larger $d$, we can encode and execute any butterfly matrix-vector multiplication as a series of dense tensor contractions.

\section{Additional Experimental Results and Details}
\label{app:exp}
\subsection{Additional experiment setup for text8}
We train Monarch-HMM using two-layer Monarch matrices and a $2^{19}$ hidden state for 20 epochs. Following prior works~\cite{zhang2024adaptable}, optimization is performed using the stochastic EM algorithm with a mini-batch of 4096 and a linearly decaying learning rate from 1 to 0.
\subsection{Additional results for text8}
\begin{table}[H]
\centering
\begin{tabular}{l rrrrrrrr}
\toprule
\multirow{1}{*}{Hidden Size} & \multicolumn{2}{c}{\emph{Monarch-2}} & \multicolumn{2}{c}{\emph{Monarch-3}} & \multicolumn{2}{c}{\emph{HMM}} & \multicolumn{2}{c}{\emph{Monarch-4}}  \\
\cmidrule[0.6pt](lr){2-3} \cmidrule[0.6pt](lr){4-5} \cmidrule[0.6pt](lr){6-7} \cmidrule[0.6pt](lr){8-9} 
& \emph{FLOPs} & \emph{BPC} & \emph{FLOPs} & \emph{BPC} & \emph{FLOPs} & \emph{BPC} & \emph{FLOPs} & \emph{BPC} \\\midrule
$2^{12}=4096$                            & 524288                    & 2.041                   & 196608                    & 2.135                   & 16777216                  & 1.908                   & 131072                    & 2.188                   \\
$2^{13}=8192$                             & 1572864                   & 1.926                   & 524288                    & 2.013                   & 67108864                  & 1.786                   & 327680                    & 2.065                   \\
$2^{14}=16384$                          & 4194304                   & 1.824                   & 1310720                   & 1.895                   & 268435456                 & 1.726                   & 786432                    & 1.948                  \\
$2^{15}=32768$                           & 12582912                  & 1.747                   & 3145728                   & 1.814                   & 1073741824                & 1.687                   & 1835008                   & 1.853                   \\
$2^{16}=65536$                           & 33554432                  & 1.690                   & 8388608                   & 1.741                   &                 &                         &                           &                         \\
$2^{17}=131072$                          & 100663296                 & 1.637                   &                           &                         &                &                         &                           &                          \\
$2^{18}=262144$                          & 268435456                 & 1.609                   &                   &                         &                &                         &                           &                         \\
$2^{19}=524288$                          & 805306368                 & 1.578                   &                  &                         &                &                         &                           &                   \\\midrule     
\end{tabular}
\caption{We report FLOPs and test set BPC of HMM and variying Monarch structures (Monarch-2, Monarch-3, and Monarch-4) for hidden size ranging from $2^{12}$ to $2^{19}$ on dataset text8.}
\end{table}

\subsection{Additional experimental setup for image modeling}

We use hidden Chow-Liu trees (HCLT) \citep{LiuNeurIPS21} to define the variable decomposition (\emph{vtree}) of the PC. 
We train all models using stochastic EM. In particular, we use a cosine learning rate decay scheduler over the course of the entire training. The mini-batch size $M = 20000$ and number of epochs $E = 20$ used in all experiments were chosen based on an initial hyperparameter search in the range $M \in [1000, 5000, 20000, 60000]$ and $E \in [5, 10, 20]$.

\paragraph{Color Transforms} The original ImageNet data has three color channels $R$, $G$, $B$, each taking integer values in $[0, 255]$. We apply a \textbf{lossless} YCoCg-R transform \citep{malvar2003ycocg} to this data to obtain three transformed color channels $Y, Co, Cg$, given as follows in integer arithmetic:
\begin{align*}
    Co &= R - B  \\
    tmp &= B + \lfloor{Co/2}\rfloor \\
    Cg &= G - tmp \\
    Y &= tmp + \lfloor{Cg/2}\rfloor
\end{align*}
where $Y$ and $Cg$ take integer values in $[0, 255]$ and $Co$ takes integer values in $[-255, 255]$. This transform is exactly invertible (lossless) and as such likelihoods on this transformed data are comparable to likelihoods on the RGB dataset.

To compare against prior PC results, we also report results using a \textbf{lossy} YCoCg transform defined as follows. Firstly, the data is normalized to take values in the range $[0, 1]$:
\begin{equation*}
    r = R / 255 ; \;g = G/255; \;b = B/255
\end{equation*}
Then, the following linear transformation is applied:
\begin{align*}
    co &= r - b \\
    tmp &= b + co/2 \\
    cg &= g - tmp \\
    y &= tmp * 2 + cg + 1
\end{align*}
The variables $y, co, cg$ now take values in $[-1, 1]$. To convert back to categorical data, we quantize the interval $[-1, 1]$ uniformly into $256$ bins and convert $y, co, cg$ into their quantized versions $Y, Co, Cg$ which take integer values in $[0, 255]$. This transformation from $R,G,B$ to $Y,Co,Cg$ is not invertible and so likelihoods on this dataset cannot be compared to the original RGB dataset.

\paragraph{Image Patches} We train PCs with $192$ variables, modeling $8\times8$ aligned image patches (with $3$ color channels). This data is obtained by splitting each image from the the ImageNet32 (resp. ImageNet64) dataset into $16$ (resp. $64$) of these patches.

\end{document}